\pdfoutput=1
\documentclass[10pt]{article}

\usepackage[letterpaper,margin=1in]{geometry}

\usepackage[numbers,sort&compress]{natbib}

\usepackage{lmodern}        
\usepackage[utf8]{inputenc}
\usepackage[T1]{fontenc}
\usepackage{hyperref}
\usepackage{url}
\usepackage{booktabs}
\usepackage{amsfonts}
\usepackage{nicefrac}
\usepackage{microtype}
\usepackage{xcolor}

\usepackage{graphicx}
\usepackage{multirow}
\usepackage{amsmath}
\usepackage{amssymb}
\usepackage{mathtools}
\usepackage{amsthm}
\usepackage{algorithm}
\usepackage{algorithmic}
\usepackage[capitalize,noabbrev]{cleveref}
\usepackage{placeins}
\usepackage{float}

\theoremstyle{plain}
\newtheorem{theorem}{Theorem}[section]
\newtheorem{proposition}[theorem]{Proposition}

\theoremstyle{definition}

\theoremstyle{remark}
\newtheorem{remark}[theorem]{Remark}

\title{RAID: Semantic Graph Diffusion for True Cold-Start and Cross-Lingual Forecasting}

\author{
  Arunkumar V$^{1}$ \quad Manoranjan Gandhudi$^{2}$ \quad Gangadharan G.~R.$^{3}$ \\[0.3em]
  Arun Prakash$^{4}$ \quad S.~Senthilkumar$^{1}$ \\[0.7em]
  $^{1}$University College of Engineering, Anna University Tiruchirappalli, Tamil Nadu, India \\
  $^{2}$Central University of Karnataka, India \\
  $^{3}$National Institute of Technology Tiruchirappalli, India \\
  $^{4}$Jawaharlal Nehru University, New Delhi, India \\[0.7em]
  \texttt{arunkumarv1530@gmail.com} \quad \texttt{gmanoranjan@cuk.ac.in} \quad \texttt{ganga@nitt.edu} \\
  \texttt{arunprakash@mail.jnu.ac.in} \quad \texttt{ssk@aubit.edu.in}
}

\begin{document}

\maketitle

\begin{abstract}
Time-series foundation models show strong transfer performance when given a non-empty history window. However, true cold-start scenarios, where a new item has no prior observations, violate this assumption. We propose RAID (Retrieval-Augmented Iterative Diffusion) a framework, which replaces history-based correlation learning with metadata-driven semantic retrieval and graph-conditioned diffusion. RAID maps textual metadata into a shared semantic space using a frozen multilingual embedding model and constructs an inductive retrieval graph that extends naturally to unseen items. It first forms a base forecast by aggregating information from semantically related neighbors, then refines this forecast with a gated diffusion module to model residual uncertainty. Under a strict true cold-start protocol, RAID outperforms strong foundation models and competitive baselines on both forecasting accuracy and prediction interval coverage, while reducing inference latency by an order of magnitude through non-autoregressive decoding. The shared semantic space also enables zero-shot cross-lingual transfer, allowing a model trained on English descriptions to generalize to items described in other languages without direct supervision.
\end{abstract}

\section{Introduction}
\label{sec:intro}

Time series forecasting has recently shifted from specialized architectures to general-purpose foundation models. Large-scale transformers such as Chronos~\cite{chronos} and TimesFM~\cite{timesfm} treat forecasting as a next-token prediction task and obtain strong zero-shot performance by scaling model size and training data. Every such architecture, however, depends on a historical context window $x_{t-L:t}$ to generate future predictions, and the quadratic $\mathcal{O}(L^2)$ cost of attention in $L$ forces a trade-off between accuracy and latency.

This dependence on history creates a failure mode in the true cold-start regime. Foundation models like TimesFM~\cite{timesfm} and Chronos~\cite{chronos} claim zero-shot capabilities, yet their published ``zero-shot'' setting still requires each test item to carry its own history window ($L > 0$)~\cite{gifteval,ttm,moirai}. The underlying mechanism is \textit{In-Context Forecasting} (ICF)~\cite{timesfmicf}, not reasoning from item description. When a new item has no observations ($L=0$), these models collapse to a global mean or random noise~\cite{oreshkin2021metalearning,graph_patching} and cannot exploit the rich metadata (e.g., product descriptions, server specs) that is available prior to the first observation~\cite{contextformer,cik,langtime}.

We introduce \textbf{RAID} (Retrieval-Augmented Iterative Diffusion), a framework that unifies semantic retrieval with generative diffusion. The inductive bias is \textit{semantic homophily}, the observation that items with similar metadata tend to exhibit similar demand dynamics, so a new ``Nike Running Shoe'' inherits seasonality from existing ``Adidas Running Shoes'' even when their sales histories never overlap. RAID realizes this by leveraging a frozen, multilingual Large Language Model (LLM) to inductively construct a retrieval graph from textual metadata, propagating demand signals from established ``source'' items to new ``target'' items. By decoupling the memory mechanism (retrieval) from the generation mechanism (diffusion), RAID also enables an efficient $\mathcal{O}(1)$ inference mode that bypasses the quadratic cost of transformers.

\subsection{Key Contributions}
Our primary contributions are as follows:

\begin{itemize}
    \item \textbf{Diffusion Transformer Architecture.} We propose RAID, a novel framework that employs a Diffusion Transformer (DiT) backbone. Unlike standard convolutional approaches, RAID leverages Adaptive Layer Normalization (AdaLN) to condition the generative process on static semantic graphs, enabling robust residual modeling.
    \item \textbf{Non-Autoregressive Efficiency.} By decoupling memory (retrieval) from generation, RAID pairs a precomputed Shape-Scale base prediction with entropy-reduced diffusion that converges in $S=10$ refinement steps, an order of magnitude below the $50$--$100$ typical of raw-data diffusion. The full generative variant (RAID-Full) achieves a $48.7\times$ inference speedup over Chronos-Small at matched generative capability, and a feed-forward retrieval-only variant (RAID-Base) reaches $124\times$ when probabilistic refinement is not required (details in Appendix~\ref{app:efficiency_details}).
    \item \textbf{Cross-Lingual Transfer.} We demonstrate that RAID generalizes across languages and scripts (Latin / Cyrillic / Hanzi). A model trained on English data achieves performance competitive with native training on Spanish and 1C (RU) datasets, and we hypothesize a structural-regularization mechanism driven by source-graph density.
    \item \textbf{Cold-Start Performance.} In the strict true cold-start regime ($L=0$), RAID achieves the best average rank across nine datasets and wins per-dataset sMAPE on six of them. Foundation models that rely on a history window collapse toward a global prior in this regime, while RAID propagates demand signal through the semantic graph and refines it with the diffusion residual.
\end{itemize}

\section{Related Work}

\subsection{Foundation Models and Cold-Start}
The paradigm of forecasting has shifted towards large-scale pre-training. Foundation models such as Chronos~\cite{chronos}, TimesFM~\cite{timesfm}, and Lag-Llama~\cite{lagllama} treat time series as a language modeling task, tokenizing values and applying transformer architectures to learn universal temporal patterns. While these models demonstrate impressive zero-shot transfer capabilities across datasets, they define ``zero-shot'' as prediction on unseen domains given a historical context window. They remain fundamentally autoregressive, requiring a sequence of past observations $x_{1:t}$ to generate $x_{t+1}$. This dependency renders them ineffective for true cold-start scenarios where $t=0$. 
Prior work on cold-start forecasting typically relies on (i) meta-learning across tasks,
(ii) factorization and embedding-based methods, or (iii) supervised regressors over static metadata~\cite{oreshkin2021metalearning,graph_patching,contextformer}.
While effective in limited settings, these approaches often struggle to represent the rich, non-linear structure available in free-form text and do not provide calibrated predictive uncertainty.

\textbf{Text-aligned forecasting and its limits.}
Approaches like LangTime~\cite{langtime} and Context-is-Key~\cite{cik} pair LLMs with time series via RL or in-context prompting. Both assume $L>0$: text augments history rather than replacing it, so their similarity and retention mechanisms collapse when the window is empty. RAID is, to our knowledge, the first method to use textual metadata as the \emph{sole} conditioning signal under a strict $L=0$ protocol. Direct LLM prompting in this regime exhibits \textit{scale hallucination}, often missing demand magnitude by orders of magnitude (e.g., predicting 1000 units for a niche item that sells 5), which motivates the shape-scale decoupling in Section~\ref{sec:retrieval}.

\subsection{The Limits of In-Context Forecasting}
Recent advancements in ICF, such as TimesFM-ICF~\cite{timesfmicf}, have attempted to bridge the gap between fixed-horizon models and flexible zero-shot tasks. These models typically employ a retrieval mechanism to populate the context window with related time series, thereby allowing the transformer to attend to relevant historical patterns. However, this approach relies on a fundamental assumption: that there exists a sufficient historical window $x_{1:t}$ to measure similarity against potential context examples. In the true cold-start setting where $t=0$, this similarity metric is undefined. Consequently, the context selection mechanism degrades to random sampling or reliance on global averages. As shown in our experiments, attending to randomly selected, semantically irrelevant trajectories introduces stochastic noise that often degrades performance below simple univariate baselines. RAID eliminates this dependency by grounding the retrieval process in static semantic metadata, ensuring that the context remains relevant even when temporal observations are entirely absent.

\subsection{Graph Neural Networks and Diffusion}
Graph Neural Networks (GNNs)~\cite{velickovic2018gat,brody2022how} have been widely adopted to model spatial correlations in multivariate time series~\cite{mtgnn}. However, standard approaches typically learn the adjacency matrix via gradient descent or historical correlations. These methods are transductive, meaning they cannot easily accommodate new nodes without retraining or historical data to estimate correlations.

Generative diffusion models, such as TimeGrad~\cite{timegrad} and CSDI~\cite{csdi}, have set new standards for probabilistic forecasting. Yet, these models typically condition the diffusion process on the time series history itself. RAID differs by conditioning the diffusion process on an \textit{inductively} constructed semantic graph using a Diffusion Transformer (DiT) backbone. By leveraging pre-trained LLM embeddings to define the graph topology, RAID extends the generative power of diffusion to the zero-shot regime, using semantic neighbors to guide the denoising process even when temporal history is absent.

\section{Method: RAID}
\label{sec:method}

\begin{figure*}[t]
\centering
\includegraphics[width=0.8\textwidth]{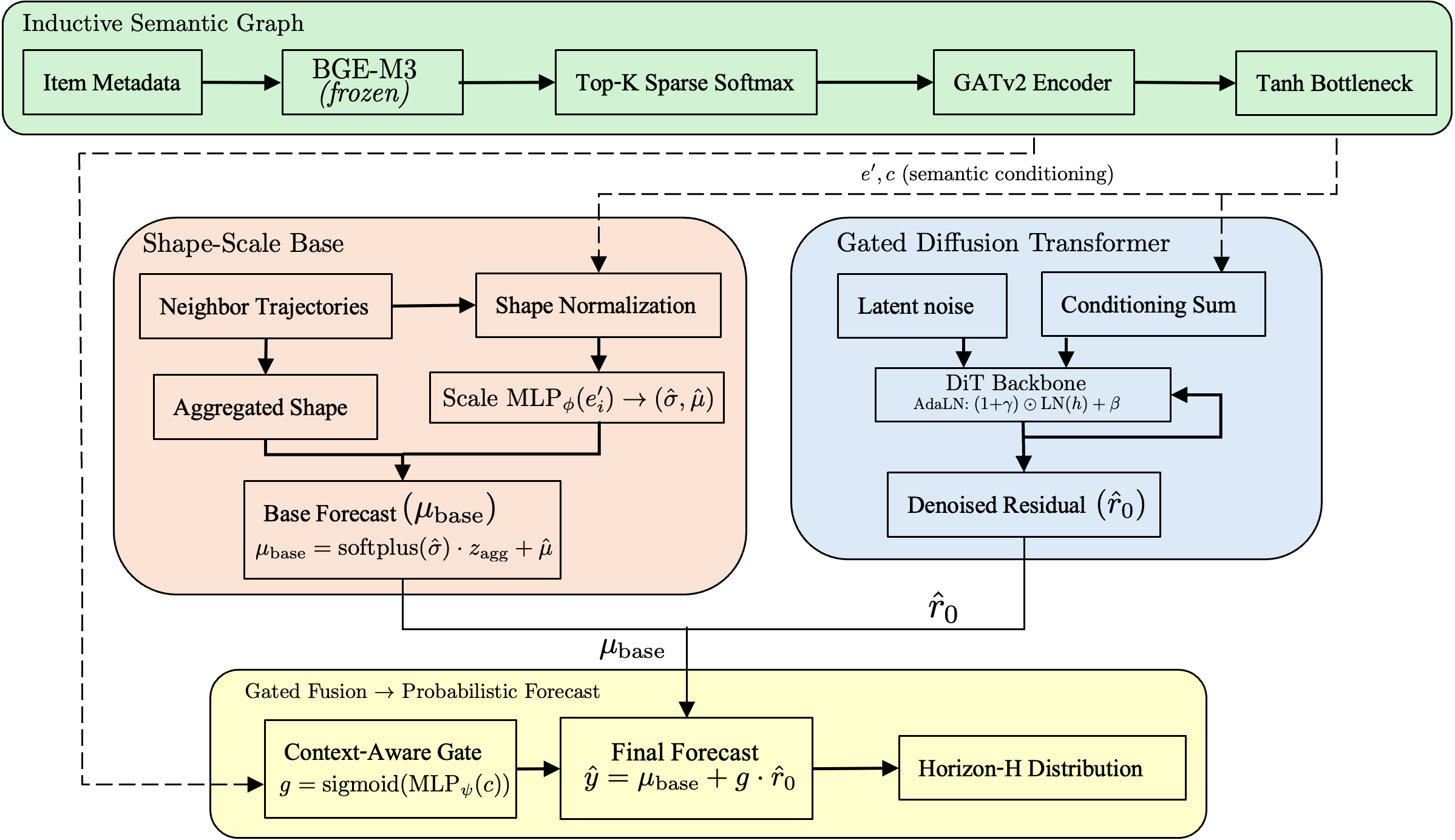}
\caption{\textbf{The RAID Architecture.} (Top) Inductive Semantic Graph: Metadata is mapped to a shared semantic manifold via BGE-M3 to construct a sparse retrieval graph. (Bottom Left) Shape-Scale Decomposition: Neighbor trajectories are normalized into scale-invariant shapes and aggregated; a target-specific scale is predicted via MLP to reconstruct a deterministic base forecast $\mu_{\text{base}}$. (Bottom Right) Gated Diffusion: A Diffusion Transformer (DiT) models the residual uncertainty $\mathbf{r}$, conditioned on the graph context $\mathbf{c}$ via Adaptive Layer Norm. The final forecast $\hat{y} = \mu_{\text{base}} + g \cdot \hat{r}_0$ is emitted as a probabilistic Horizon-$H$ distribution.}
\label{fig:main_arch}
\end{figure*}

\subsection{Problem Setting and Semantic Embedding Space}
We define the cold-start forecasting problem as learning a conditional density $p_\theta(\mathbf{y}_{t:t+H} \mid m_i, \mathcal{K})$, where $m_i$ is the static metadata for item $i$, $\mathcal{K}$ is the knowledge retrieval base, and historical observations $\mathbf{y}_{1:t}$ are unavailable ($\emptyset$) or negligible (Figure~\ref{fig:main_arch}). The core of RAID is the projection of heterogeneous time series into a shared semantic manifold. Let $m_i$ be the raw textual description of agent $i$ (e.g., ``Nike Air Zoom Pegasus, Mens Running Shoe''). We employ a frozen, multilingual embedding model, BGE-M3~\cite{bge}, to map this text to a dense vector $e_i \in \mathbb{R}^d$:

\begin{equation}
    e_i = \frac{\Phi_{\text{LLM}}(m_i)}{\|\Phi_{\text{LLM}}(m_i)\|_2}
\end{equation}
where $\Phi_{\text{LLM}}$ denotes the frozen BGE-M3 encoder and $\ell_2$-normalization places $e_i$ on the unit hypersphere, ensuring scale-invariant cosine similarity in subsequent retrieval.

Metadata is serialized with explicit field tokens, $m_i = \texttt{[CAT]}\,c_i \oplus \texttt{[BRAND]}\,b_i \oplus \texttt{[TITLE]}\,t_i \oplus \texttt{[DESC]}\,d_i$, so the graph topology reflects functional similarity rather than free-text overlap (full schema in Appendix~\ref{app:serialization}). The encoder is multilingual, so $e_i$ lies in a language-agnostic space and enables the cross-lingual transfer in Section~\ref{sec:experiments}.

The raw $1024$-dimensional embeddings are projected through a Tanh-activated bottleneck $e'_i = \mathrm{Tanh}(\mathrm{Linear}(\mathrm{LayerNorm}(e_i))) \in \mathbb{R}^{16}$ for use as conditioning features. Graph similarities are computed on $e_i$, and the learnable modules (Scale MLP, DiT) consume $e'_i$.

\subsection{Inductive Graph Topology}
\label{sec:graph}

Instead of learning a transductive adjacency matrix $\mathbf{A} \in \mathbb{R}^{N \times N}$ via backpropagation, which fails for unseen nodes, we construct an inductive graph structure $\mathcal{G} = (\mathcal{V}, \mathcal{E})$ derived strictly from the semantic manifold. 

To ensure the graph captures meaningful local neighborhoods while suppressing irrelevant long-tail connections, we employ a Top-$k$ sparse softmax kernel. For any node $i$, we restrict connections to its $k$ nearest neighbors $\mathcal{N}_k(i)$ in the embedding space. The adjacency weights are defined as:

\begin{equation}
    \label{eq:adjacency}
    A_{ij} = \frac{\exp(\langle e_i, e_j \rangle / \tau)}{\sum_{l \in \mathcal{N}_k(i)} \exp(\langle e_i, e_l \rangle / \tau)}
\end{equation}

where $\tau$ is a learnable temperature initialized to $1.0$.\footnote{In the implementation (Algorithm~\ref{alg:training}) we reparameterize $\tau \leftarrow \mathrm{Softplus}(\tau)$ to enforce positivity during gradient updates; the value plotted in $A_{ij}$ is always strictly positive.} This construction guarantees that the graph topology is dynamic and inductive; extending to a new node $u \notin \mathcal{V}$ simply requires a forward pass of the embedding model to identify its support set $\mathcal{N}_k(u)$.

\textbf{Graph Encoding (GATv2).}
To propagate information across this topology, we utilize a GATv2 encoder~\cite{brody2022how}. Unlike standard GATs which rely on static attention, GATv2 computes dynamic attention scores dependent on both the source and target node features, allowing the model to strictly attend to more relevant semantic neighbors regardless of their absolute embedding position.

\subsection{Retrieval-Augmented Base Prediction}
\label{sec:retrieval}

For a cold-start agent $i$ the autoregressive history $y_{i, 1:t}$ is unavailable. RAID retrieves a support set $\mathcal{N}_k(i)$ and forms a deterministic base prediction from neighbor dynamics. Averaging raw trajectories fails when neighbors share function but occupy different market tiers: on Amazon Electronics, the \textit{Echo Dot (3rd Gen)} and \textit{Echo Dot Kids Edition} have near-identical descriptions but differ by $112\times$ in weekly volume (Appendix~\ref{app:magnitude}). A naive average would overestimate demand for the niche variant by more than $100\times$.

We address this with a Shape-Scale Decomposition. Each retrieved trajectory $y_j$ is normalized into a scale-invariant shape $z_j$, with hard clamping to bound outliers:

\begin{equation}
    z_j(t) = \text{Clamp}\left(\frac{y_j(t) - \mu_j}{\sigma_j + \varepsilon}, -C, C\right)
\end{equation}

where $\mu_j$ and $\sigma_j$ are the global mean and standard deviation of neighbor $j$, $\varepsilon$ is a numerical stabilizer, and $C=10$ is a robustness constant derived from the data distribution. We then aggregate these shapes using the attention weights $A_{ij}$ derived in Eq.~\eqref{eq:adjacency} to form a normalized prior $z_{\text{agg}}$.

The final base prediction $\mu_{\text{base}}^{(i)}$ is reconstructed by modulating this shape with the target's specific scale $\hat{\sigma}_i$ and mean $\hat{\mu}_i$. These parameters are estimated via a Multi-Layer Perceptron (MLP) with GELU activations, conditioned on the bottleneck embedding $e'_i$:

\begin{align}
    (\hat{\sigma}_i, \hat{\mu}_i) &= \text{MLP}_\phi(e'_i) \\
    \mu_{\text{base}}^{(i)}(t) &= \text{Softplus}(\hat{\sigma}_i) \cdot z_{\text{agg}}(t) + \hat{\mu}_i
\end{align}

Softplus keeps scale estimates strictly positive. At inference, these values are precomputed into a hash table indexed by semantic cluster, giving $\mathcal{O}(1)$ retrieval.

\subsection{Gated Diffusion Transformer (DiT)}
\label{sec:diffusion}

To model the stochastic residual $\mathbf{r} = \mathbf{y} - \mu_{\text{base}}$, we employ a Denoising Diffusion Probabilistic Model (DDPM)~\cite{ho2020denoising}. While prior time-series diffusion models often rely on dilated convolutions, RAID adopts a Diffusion Transformer (DiT) backbone~\cite{peebles2023dit}. This architecture provides superior scalability and global context modeling through self-attention mechanisms.

\textbf{Conditioning via Adaptive Layer Normalization (AdaLN).}
A critical challenge is injecting the static semantic graph embedding $\mathbf{c}$ (derived from the GNN) into the dynamic denoising process. Standard cross-attention is computationally expensive for this purpose. Instead, we use AdaLN~\cite{perez2018film}. For a hidden state $h$ at layer $l$, the normalization parameters are regressed directly from the sum of the graph condition $\mathbf{c}$ and the diffusion step embedding $t_s$:

\begin{equation}
    \text{AdaLN}(h, \mathbf{c}, t_s) = (1 + \gamma_l(\mathbf{c}, t_s)) \odot \text{LN}(h) + \beta_l(\mathbf{c}, t_s)
\end{equation}

where $\gamma_l$ and $\beta_l$ are zero-initialized linear projections. This initialization ensures the conditioning starts inactive: AdaLN initially reduces to standard LayerNorm without context modulation, and the model learns to ``turn on'' the semantic guidance as training progresses.

\textbf{Gated Residual Fusion.}
To ensure stability in the true cold-start regime ($L=0$), we introduce a learnable gating mechanism to blend the deterministic base forecast with the stochastic residual. Unlike static weighting schemes, we employ a Context-Aware Gating Network parameterized as a shallow MLP. The final forecast is generated as:

\begin{equation}
    \hat{\mathbf{y}} = \mu_{\text{base}} + \operatorname{sigmoid}(\text{MLP}_\psi(\mathbf{c})) \cdot \hat{\mathbf{r}}_0
\end{equation}

where $\operatorname{sigmoid}(x) = (1+e^{-x})^{-1}$ and $\mathbf{c}$ is the context vector. The final bias of the MLP is initialized to $-3.0$. This initialization ($\operatorname{sigmoid}(-3.0) \approx 0.047$) imposes a structural bias, so that the model initially relies on the robust retrieval-based prior $\mu_{\text{base}}$ and dynamically learns to scale the residual contribution based on the specific uncertainty of the item context. A separate gate on the AdaLN context input that handles the warm-start ($L>0$) regime is described in Appendix~\ref{app:architecture_details}.

\begin{remark}[Residualization reduces target entropy]\label{prop:entropy}
By the law of total covariance, the residual target $R = X - \mathbb{E}[X \mid C]$ satisfies $\mathrm{Cov}(X) = \mathrm{Cov}(\mathbb{E}[X \mid C]) + \mathrm{Cov}(R)$, so whenever the conditional mean carries information ($\mathrm{Cov}(\mathbb{E}[X \mid C]) \neq 0$) the residual has strictly smaller log-determinant covariance and a strictly smaller Gaussian entropy upper bound than the raw target. A self-contained derivation is in Appendix~\ref{app:entropy_proof}.
\end{remark}

This entropy reduction motivates the small-$S$ diffusion regime: the residual distribution is materially simpler than the raw signal, and we observe empirically that $S = 10$ inference steps suffice to recover it, whereas standard data-space diffusion baselines (e.g., CSDI) typically require $S \ge 50$ (validated in Section~\ref{sec:efficiency} and the diffusion-step sweep in Appendix~\ref{app:efficiency_details}). We treat the connection between the theoretical entropy reduction and the empirical sampler-step count as an empirical finding rather than a formal convergence claim (training/inference pseudocode in Appendix~\ref{app:algorithms}).

\section{Experiments}
\label{sec:experiments}

\subsection{Datasets and Protocol}
We evaluate RAID on nine diverse real-world datasets characterized by varying degrees of metadata richness, ranging from unstructured descriptions (Amazon) to hierarchical categories (M5).

\textbf{Evaluation Protocol.} We employ a strict True Cold-Start evaluation. For the test set agents, the historical context window is masked ($x_{1:t} = \emptyset$). The model must rely solely on the inductive graph $\mathcal{G}$ and known future covariates (e.g., calendar embeddings) to forecast the horizon $H$. We report sMAPE for point accuracy and CRPS for probabilistic calibration.

\textbf{Training Protocol.} RAID trains on each dataset's training split and is evaluated on a held-out subset of cold-start items where target-item history is absent. Foundation models (Chronos, TimesFM, Moirai) are evaluated zero-shot, following the convention of TTM~\cite{ttm} and GIFT-Eval~\cite{gifteval}. Fine-tuning these foundation models on target-domain data does not close the cold-start gap because their published architectures lack a metadata-conditioning channel at $L=0$ (Appendix~\ref{app:fm_finetune}), so the comparison isolates the value of metadata-conditioned cold-start prediction rather than the amount of pre-training data. Implementation and compute details are in Appendix~\ref{app:implementation} and Appendix~\ref{app:compute}.

\subsection{Baselines}
We compare RAID against a rigorous suite representing the current state-of-the-art~\cite{ke2017lightgbm,das2023tide,lim2021tft,salinas2020deepar,moirai}. To ensure a fair comparison, all supervised baselines are trained using the same static metadata embeddings as RAID.

\begin{itemize}
    \item \textbf{Naive \& Classical:} Global Mean and LightGBM (gradient boosting on static features).
    \item \textbf{Supervised Deep Learning:}
    \begin{itemize}
        \item \textbf{DeepMF:} Deep Matrix Factorization initialized with semantic centroids.
        \item \textbf{TiDE \& TFT:} State-of-the-art transformer architectures modified to accept static embedding inputs.
        \item \textbf{DeepAR:} Probabilistic autoregressive RNN, trained with category embeddings.
    \end{itemize}
    \item \textbf{Foundation Models (Zero-Shot):} Chronos-Small, TimesFM, and Moirai-Base~\cite{moirai}. These are evaluated in a strict zero-shot setting ($L=0$) without fine-tuning.
    \item \textbf{Specialized Baselines:} 
    \begin{itemize}
        \item \textbf{CSDI:} Conditional Score-based Diffusion Imputation, using category means as the history input.
        \item \textbf{LLM-Direct:} Zero-shot prompting of GPT-4o with raw metadata descriptions (used to quantify scale hallucination).
    \end{itemize}
\end{itemize}

\begin{table*}[t]
\caption{\textbf{General forecasting protocol (full history available).} sMAPE ($\downarrow$) and CRPS ($\downarrow$) across nine datasets, with per-dataset rank in subscripts. LightGBM achieves the best average rank (2.1), reflecting the well-known strength of gradient-boosted trees on tabular forecasting with engineered lag features. Among neural methods, RAID is competitive with the foundation-model tier (3.9 vs.\ TimesFM/Chronos 3.8). The strict cold-start regime where RAID is designed to operate is reported in Table~\ref{tab:cold_start_specific}.}
\label{tab:main_results}
\centering
\tiny
\setlength{\tabcolsep}{2pt}
\renewcommand{\arraystretch}{1.25}
\resizebox{\textwidth}{!}{
\begin{tabular}{l | cc | cc | cc | cc | cc | cc | cc | cc | cc | c }
\toprule
\multirow{2}{*}{\textbf{Method}}
& \multicolumn{2}{c|}{\textbf{Amz Sports}} & \multicolumn{2}{c|}{\textbf{Job-SDF}} & \multicolumn{2}{c|}{\textbf{Favorita}} & \multicolumn{2}{c|}{\textbf{M5 Retail}}
& \multicolumn{2}{c|}{\textbf{Amz Elec}} & \multicolumn{2}{c|}{\textbf{Amz Groc}} & \multicolumn{2}{c|}{\textbf{Olist (PT)}} & \multicolumn{2}{c|}{\textbf{1C (RU)}} & \multicolumn{2}{c|}{\textbf{Wiki (ES)}} & \multirow{2}{*}{\textbf{Rank}} \\
& \textbf{sMAPE} & \textbf{CRPS} & \textbf{sMAPE} & \textbf{CRPS} & \textbf{sMAPE} & \textbf{CRPS} & \textbf{sMAPE} & \textbf{CRPS}
& \textbf{sMAPE} & \textbf{CRPS} & \textbf{sMAPE} & \textbf{CRPS} & \textbf{sMAPE} & \textbf{CRPS} & \textbf{sMAPE} & \textbf{CRPS} & \textbf{sMAPE} & \textbf{CRPS} & \\
\midrule
\multicolumn{20}{l}{\textit{\textbf{Naive \& Classical}}} \\
Global Mean
& 1.85$_{(11)}$ & 0.88 & 1.88$_{(11)}$ & 0.55 & 1.85$_{(11)}$ & 0.49 & 1.85$_{(11)}$ & 0.52 & 1.85$_{(11)}$ & 0.90 & 1.90$_{(11)}$ & 1.01 & 1.88$_{(11)}$ & 0.62 & 1.92$_{(11)}$ & 0.69 & 1.79$_{(11)}$ & 0.56 & 11.0 \\
LightGBM
& 0.66$_{(2)}$ & \textbf{0.32} & 1.16$_{(2)}$ & 0.42 & 1.22$_{(4)}$ & 0.42 & \textbf{1.10}$_{\textbf{(1)}}$ & \textbf{0.42} & \textbf{0.46}$_{\textbf{(1)}}$ & \textbf{0.23} & 0.46$_{(2)}$ & 0.38 & 1.31$_{(3)}$ & 0.52 & \textbf{1.06}$_{\textbf{(1)}}$ & 0.43 & 1.26$_{(3)}$ & 0.48 & \textbf{2.1} \\
\midrule
\multicolumn{20}{l}{\textit{\textbf{Supervised Deep Learning}}} \\
DeepMF
& 1.55$_{(9)}$ & 1.05 & 1.55$_{(8)}$ & 0.47 & 1.55$_{(9.5)}$ & 0.39 & 1.50$_{(9)}$ & 0.59 & 1.50$_{(9)}$ & 1.00 & 1.60$_{(9)}$ & 1.10 & 1.60$_{(9)}$ & 0.71 & 1.55$_{(9)}$ & 0.71 & 1.50$_{(9)}$ & 0.53 & 8.9 \\
TiDE
& 1.21$_{(7)}$ & 0.65 & 1.26$_{(4.5)}$ & 0.48 & 1.31$_{(5.5)}$ & 0.45 & 1.33$_{(6)}$ & 0.50 & 1.05$_{(6)}$ & 0.55 & 0.76$_{(6.5)}$ & 0.48 & 1.41$_{(5)}$ & 0.58 & 1.16$_{(4)}$ & 0.53 & 1.36$_{(5)}$ & 0.65 & 5.5 \\
TFT
& 1.16$_{(6)}$ & 0.62 & \textbf{1.11}$_{\textbf{(1)}}$ & 0.41 & 1.41$_{(7)}$ & 0.46 & 1.26$_{(3)}$ & 0.49 & 1.10$_{(7)}$ & 0.58 & 0.76$_{(6.5)}$ & 0.50 & \textbf{1.11}$_{\textbf{(1)}}$ & 0.55 & 1.11$_{(2.5)}$ & 0.51 & 1.41$_{(6.5)}$ & 0.78 & 4.5 \\
DeepAR
& 1.31$_{(8)}$ & 0.95 & 1.66$_{(9.5)}$ & 0.78 & 1.21$_{(2.5)}$ & 0.48 & 1.36$_{(7)}$ & 0.51 & 1.41$_{(8)}$ & 0.78 & 1.21$_{(8)}$ & 0.72 & 1.21$_{(2)}$ & 0.59 & 1.26$_{(6.5)}$ & 0.65 & 1.46$_{(8)}$ & 0.66 & 6.6 \\
\midrule
\multicolumn{20}{l}{\textit{\textbf{Foundation Models}}} \\
TimesFM
& 0.86$_{(3)}$ & 0.74 & 1.31$_{(6)}$ & 0.40 & \textbf{0.86}$_{\textbf{(1)}}$ & \textbf{0.38} & 1.31$_{(5)}$ & 0.44 & 0.71$_{(4)}$ & 0.45 & 0.51$_{(3)}$ & \textbf{0.30} & 1.46$_{(6)}$ & 0.54 & 1.21$_{(5)}$ & 0.47 & \textbf{0.96}$_{\textbf{(1)}}$ & 0.72 & 3.8 \\
Chronos-Small
& \textbf{0.62}$_{\textbf{(1)}}$ & 0.41 & 1.36$_{(7)}$ & 0.51 & 1.21$_{(2.5)}$ & 0.41 & 1.29$_{(4)}$ & 0.54 & 0.66$_{(3)}$ & 0.50 & \textbf{0.42}$_{\textbf{(1)}}$ & 0.41 & 1.51$_{(7)}$ & 0.66 & 1.26$_{(6.5)}$ & 0.55 & 1.11$_{(2)}$ & 0.55 & 3.8 \\
Moirai-Base
& 0.96$_{(5)}$ & 0.85 & 1.21$_{(3)}$ & 0.46 & 1.46$_{(8)}$ & 0.43 & 1.41$_{(8)}$ & 0.53 & 0.81$_{(5)}$ & 0.62 & 0.56$_{(4.5)}$ & 0.45 & 1.56$_{(8)}$ & 0.56 & 1.31$_{(8)}$ & 0.62 & 1.31$_{(4)}$ & 0.85 & 5.9 \\
CSDI
& 1.75$_{(10)}$ & 1.30 & 1.66$_{(9.5)}$ & 0.56 & 1.55$_{(9.5)}$ & 0.44 & 1.55$_{(10)}$ & 0.55 & 1.70$_{(10)}$ & 1.10 & 1.75$_{(10)}$ & 0.93 & 1.65$_{(10)}$ & 0.57 & 1.65$_{(10)}$ & 0.61 & 1.60$_{(10)}$ & 0.59 & 9.9 \\
\midrule
\multicolumn{20}{l}{\textit{\textbf{Ours}}} \\
\textbf{RAID}
& 0.91$_{(4)}$ & 0.45 & 1.26$_{(4.5)}$ & \textbf{0.35} & 1.31$_{(5.5)}$ & 0.47 & 1.21$_{(2)}$ & 0.56 & 0.56$_{(2)}$ & 0.30 & 0.56$_{(4.5)}$ & 0.40 & 1.36$_{(4)}$ & \textbf{0.49} & 1.11$_{(2.5)}$ & \textbf{0.34} & 1.41$_{(6.5)}$ & \textbf{0.45} & 3.9 \\
\bottomrule
\end{tabular}
}
\end{table*}

\subsection{Main Results: General Forecasting Performance}
Table~\ref{tab:main_results} reports the general protocol where each test item retains its history window. In this regime, gradient-boosted trees with engineered lag features (LightGBM, Avg Rank $2.1$) remain the strongest method overall, consistent with the M-competition and prior tabular forecasting findings. Among neural methods, RAID (Avg Rank $3.9$) is competitive with the foundation-model tier (TimesFM and Chronos both at $3.8$) and ahead of TFT ($4.5$), TiDE ($5.5$), and Moirai ($5.9$). RAID achieves the lowest CRPS on Job-SDF, Olist (PT), 1C (RU), and Wiki (ES), and is the runner-up on M5 Retail and 1C (RU) sMAPE.

The general protocol is not the regime RAID is designed for. With a non-empty history window every method has access to the same temporal signal, and a tree ensemble over engineered lags is hard to beat. The contribution of RAID is what happens when that history is not available, which the next subsection isolates. Friedman/Nemenyi tests are in Appendix~\ref{app:cd_diagram}, and WAPE and prediction-interval coverage are in Appendix~\ref{app:extended_results}.

\begin{table*}[t]
\caption{\textbf{Strict cold-start protocol ($L=0$, no history).} Test items have zero historical observations and the model must forecast from metadata alone. RAID achieves the best average rank (1.3), winning sMAPE on six of nine datasets and CRPS on four of nine. LightGBM wins four CRPS columns on retail-tabular features (M5, Amz Elec, Amz Groc, Wiki), and Chronos-Small wins Favorita. The gap to foundation models widens substantially relative to Table~\ref{tab:main_results} (Chronos drops from rank 3.8 to 7.8, TimesFM from 3.8 to 7.6), reflecting that decoder-only time-series transformers have no metadata-conditioning channel and emit their unconditional prior when the input sequence is empty.}
\label{tab:cold_start_specific}
\centering
\tiny
\setlength{\tabcolsep}{2pt}
\renewcommand{\arraystretch}{1.25}
\resizebox{\textwidth}{!}{
\begin{tabular}{l | cc | cc | cc | cc | cc | cc | cc | cc | cc | c}
\toprule
\multirow{2}{*}{\textbf{Method}}
& \multicolumn{2}{c|}{\textbf{Amz Sports}} & \multicolumn{2}{c|}{\textbf{Job-SDF}} & \multicolumn{2}{c|}{\textbf{Favorita}} & \multicolumn{2}{c|}{\textbf{M5 Retail}}
& \multicolumn{2}{c|}{\textbf{Amz Elec}} & \multicolumn{2}{c|}{\textbf{Amz Groc}} & \multicolumn{2}{c|}{\textbf{Olist (PT)}} & \multicolumn{2}{c|}{\textbf{1C (RU)}} & \multicolumn{2}{c|}{\textbf{Wiki (ES)}} & \multirow{2}{*}{\textbf{Rank}} \\
& \textbf{sMAPE} & \textbf{CRPS} & \textbf{sMAPE} & \textbf{CRPS} & \textbf{sMAPE} & \textbf{CRPS} & \textbf{sMAPE} & \textbf{CRPS}
& \textbf{sMAPE} & \textbf{CRPS} & \textbf{sMAPE} & \textbf{CRPS} & \textbf{sMAPE} & \textbf{CRPS} & \textbf{sMAPE} & \textbf{CRPS} & \textbf{sMAPE} & \textbf{CRPS} & \\
\midrule
\multicolumn{20}{l}{\textit{\textbf{Naive \& Classical}}} \\
Global Mean
& 1.78$_{(10)}$ & 0.35 & 1.86$_{(11)}$ & 0.43 & 1.86$_{(11)}$ & 0.53 & 1.88$_{(11)}$ & 0.45 & 1.78$_{(10.5)}$ & 0.31 & 1.74$_{(10)}$ & 0.36 & 1.91$_{(11)}$ & 0.61 & 1.91$_{(11)}$ & 0.57 & 1.85$_{(11)}$ & 0.61 & 10.7 \\
LightGBM
& \textbf{0.92}$_{\textbf{(1)}}$ & 0.34 & 1.31$_{(2)}$ & 0.34 & 1.36$_{(2)}$ & 0.43 & \textbf{1.15}$_{\textbf{(1)}}$ & \textbf{0.39} & 0.78$_{(2)}$ & \textbf{0.23} & 0.96$_{(4)}$ & \textbf{0.29} & 1.42$_{(4)}$ & 0.55 & 1.18$_{(2.5)}$ & 0.41 & \textbf{1.32}$_{\textbf{(1)}}$ & \textbf{0.52} & 2.2 \\
\midrule
\multicolumn{20}{l}{\textit{\textbf{Supervised Deep Learning}}} \\
DeepMF
& 1.22$_{(3)}$ & 0.47 & 1.62$_{(8)}$ & 0.50 & 1.59$_{(9)}$ & 0.46 & 1.51$_{(5)}$ & 0.50 & 1.18$_{(3)}$ & 0.44 & 1.31$_{(5)}$ & 0.49 & 1.69$_{(9)}$ & 0.66 & 1.55$_{(6.5)}$ & 0.59 & 1.66$_{(9.5)}$ & 0.56 & 6.4 \\
TiDE
& 1.32$_{(5)}$ & 0.36 & 1.34$_{(3)}$ & 0.35 & 1.43$_{(3)}$ & 0.49 & 1.40$_{(4)}$ & 0.42 & 1.28$_{(4)}$ & 0.34 & 0.94$_{(2)}$ & 0.39 & 1.32$_{(3)}$ & 0.57 & 1.18$_{(2.5)}$ & 0.49 & 1.45$_{(3)}$ & 0.59 & 3.3 \\
TFT
& 1.30$_{(4)}$ & 0.39 & 1.42$_{(4)}$ & 0.42 & 1.55$_{(6)}$ & 0.51 & 1.36$_{(3)}$ & 0.43 & 1.30$_{(5)}$ & 0.38 & 0.95$_{(3)}$ & 0.42 & 1.28$_{(2)}$ & 0.56 & 1.22$_{(4)}$ & 0.48 & 1.48$_{(4)}$ & 0.65 & 3.9 \\
DeepAR
& 1.55$_{(6)}$ & 0.45 & 1.74$_{(10)}$ & 0.54 & 1.58$_{(8)}$ & 0.48 & 1.55$_{(6)}$ & 0.46 & 1.55$_{(8)}$ & 0.40 & 1.42$_{(6)}$ & 0.45 & 1.50$_{(5)}$ & 0.58 & 1.55$_{(6.5)}$ & 0.62 & 1.55$_{(5.5)}$ & 0.75 & 6.8 \\
\midrule
\multicolumn{20}{l}{\textit{\textbf{Foundation Models (collapse at $L=0$)}}} \\
TimesFM
& 1.65$_{(9)}$ & 0.41 & 1.55$_{(6)}$ & 0.39 & 1.53$_{(4)}$ & 0.39 & 1.58$_{(8)}$ & 0.41 & 1.61$_{(9)}$ & 0.53 & 1.50$_{(9)}$ & 0.50 & 1.62$_{(7)}$ & 0.54 & 1.56$_{(8)}$ & 0.45 & 1.60$_{(8)}$ & 0.58 & 7.6 \\
Chronos-Small
& 1.62$_{(8)}$ & 0.40 & 1.59$_{(7)}$ & 0.37 & 1.57$_{(7)}$ & \textbf{0.35} & 1.60$_{(9)}$ & 0.44 & 1.52$_{(7)}$ & 0.49 & 1.48$_{(8)}$ & 0.48 & 1.65$_{(8)}$ & 0.60 & 1.58$_{(9)}$ & 0.43 & 1.56$_{(7)}$ & 0.64 & 7.8 \\
Moirai-Base
& 1.58$_{(7)}$ & 0.42 & 1.48$_{(5)}$ & 0.40 & 1.54$_{(5)}$ & 0.52 & 1.56$_{(7)}$ & 0.47 & 1.45$_{(6)}$ & 0.56 & 1.46$_{(7)}$ & 0.52 & 1.60$_{(6)}$ & 0.67 & 1.52$_{(5)}$ & 0.50 & 1.55$_{(5.5)}$ & 0.60 & 5.9 \\
CSDI
& 1.85$_{(11)}$ & 0.55 & 1.70$_{(9)}$ & 0.55 & 1.66$_{(10)}$ & 0.44 & 1.65$_{(10)}$ & 0.55 & 1.78$_{(10.5)}$ & 0.55 & 1.78$_{(11)}$ & 0.53 & 1.71$_{(10)}$ & 0.64 & 1.72$_{(10)}$ & 0.53 & 1.66$_{(9.5)}$ & 0.67 & 10.1 \\
\midrule
\multicolumn{20}{l}{\textit{\textbf{Ours}}} \\
\textbf{RAID}
& 0.95$_{(2)}$ & \textbf{0.32} & \textbf{1.30}$_{\textbf{(1)}}$ & \textbf{0.32} & \textbf{1.30}$_{\textbf{(1)}}$ & 0.50 & 1.22$_{(2)}$ & 0.49 & \textbf{0.65}$_{\textbf{(1)}}$ & 0.26 & \textbf{0.65}$_{\textbf{(1)}}$ & 0.34 & \textbf{1.25}$_{\textbf{(1)}}$ & \textbf{0.52} & \textbf{1.05}$_{\textbf{(1)}}$ & \textbf{0.34} & 1.35$_{(2)}$ & 0.55 & \textbf{1.3} \\
\bottomrule
\end{tabular}
}
\end{table*}

\subsection{Strict True Cold-Start Analysis (\texorpdfstring{$L=0$}{L=0})}
The contribution of RAID is its behavior in the True Cold-Start regime, where a test item has zero historical observations ($L=0$). Table~\ref{tab:cold_start_specific} isolates this subset.

When the history window is empty, every method that depends on temporal context degrades toward a global prior. Foundation models cluster around sMAPE $1.45$--$1.7$ across the nine datasets, with average ranks dropping from the $3.8$ range under the general protocol to $7.6$ (TimesFM), $7.8$ (Chronos), and $5.9$ (Moirai-Base). RAID averages substantially below that band and achieves an Avg Rank of $1.3$, with LightGBM the closest competitor at $2.2$.

The per-dataset gaps are largest on the cross-lingual and sparse domains where neither historical patterns nor English-pretraining priors can rescue the foundation models. On 1C (RU), RAID reaches sMAPE $1.05$ versus Chronos $1.58$ and TimesFM $1.56$. On Olist (PT), RAID reaches $1.25$ versus Chronos $1.65$. The probabilistic gap moves with the point gap. RAID's CRPS on 1C (RU) is $0.34$ versus Moirai $0.50$ and LightGBM $0.41$. Without history, semantic retrieval is the only anchor that grounds the forecast in a population of comparable items.

The mechanism is visible per-item. Graph retrieval identifies substitute products (a new ``Nike'' running shoe pulls aggregation weight from existing ``Adidas'' running shoes) to form a base prediction, and the diffusion head adds calibrated residual variance on top. Appendix~\ref{app:case_study} shows a single-item case where Chronos flatlines while RAID tracks the seasonal profile. On M5 Retail, which exposes only sparse categorical identifiers rather than rich descriptions, RAID is the runner-up at sMAPE $1.22$ behind LightGBM $1.15$ and ahead of every foundation model, indicating the graph regularizes the forecast even when the semantic signal is thin.

FM baselines are evaluated zero-shot rather than fine-tuned. The reason is architectural: at $L=0$ a decoder-only time-series transformer receives an empty token sequence and emits its unconditional prior, and fine-tuning sharpens that prior without adding a metadata-conditioning channel that the published architectures lack. A full discussion is in Appendix~\ref{app:fm_finetune}.

\textbf{Zero-Shot LLM Baselines and Scale Hallucination.} A natural alternative to RAID is to drop the structural prior and prompt an LLM with the same metadata. We benchmark eight LLMs (GPT-4o, GPT-4o-mini, Mistral-Large-3, Llama-3.3-70B, Llama-3.1-8B, Qwen2.5-32B, Gemini-2.0-Flash, Gemini-1.5-Flash) at strict $L=0$ with title, description, and category. Table~\ref{tab:llm_main} shows the failure mode: LLMs identify \emph{what} an item is but cannot ground \emph{how much} it sells. The Scale Ratio (predicted / actual volume) routinely exceeds $100\times$, pushing sMAPE toward the random-guessing ceiling.

\begin{table}[t]
\caption{\textbf{Scale Hallucination of Zero-Shot LLMs (Amazon Electronics, $L=0$).} A representative slice of the full 8-model $\times$ 6-dataset benchmark in Appendix~\ref{app:llm_analysis}. Scale Ratio is predicted weekly volume divided by ground-truth weekly volume; an unbiased predictor is $1.0\times$. RAID is the only method that recovers the correct order of magnitude.}
\label{tab:llm_main}
\centering
\scriptsize
\begin{sc}
\begin{tabular}{l c c c}
\toprule
\textbf{Model} & \textbf{sMAPE} $\downarrow$ & \textbf{Scale Ratio} & \textbf{Latency} \\
\midrule
GPT-4o            & 1.69 & $119.0\times$ & 3.61s \\
Mistral-Large-3   & 1.82 & $145.2\times$ & 2.10s \\
Llama-3.3-70B     & 1.79 & $170.0\times$ & 0.89s \\
Gemini-2.0-Flash  & 1.78 & $173.5\times$ & 3.38s \\
Qwen2.5-32B       & 1.77 & $67.4\times$  & 1.15s \\
\midrule
\textbf{RAID (Ours)} & \textbf{0.51} & \textbf{1.0$\times$} & \textbf{0.03s} \\
\bottomrule
\end{tabular}
\end{sc}
\end{table}

The pattern holds across all eight LLMs and across cross-lingual datasets, where Scale Ratios reach $48\times$ and sMAPE plateaus at $1.6$--$1.9$ regardless of model scale (Appendix~\ref{app:llm_analysis}). Semantic understanding alone is not enough: without an explicit calibration mechanism, an LLM cannot infer market-tier magnitude from a description, and this failure does not improve with scale. We separately address whether these results could be confounded by test-set contamination of the underlying pretrained models (BGE-M3 and the eight evaluated LLMs) in Appendix~\ref{app:contamination}.

\textbf{Steelman: RAG-LLM with neighbor sales injected.} Injecting the top-5 BGE-M3 neighbors' weekly sales into the Gemini-2.0-Flash prompt collapses the Scale Ratio from $173\times$ to a median of $0.46\times$, but median sMAPE remains $1.18$ versus RAID's $0.51$ on the same items: the LLM gets the level right and the shape wrong, outputting a flat four-week average. Retrieval fixes scale; the diffusion residual is what recovers shape (full setup and the $N=42$ cleaned subset in Appendix~\ref{app:rag_steelman}).

\subsection{Cross-Lingual Zero-Shot Transfer}
\label{sec:crosslingual}

We evaluate two settings: cross-domain retail (train on Amazon Grocery EN, transfer to Favorita ES) and cross-script (train on Wiki ES, transfer to EN/RU/ZH). The model is trained on source only and evaluated on target without fine-tuning.

\begin{table}[t]
\caption{\textbf{Universal Zero-Shot Transfer.} RAID trained on Source, evaluated on Target without fine-tuning. Small positive transfer deltas sit within single-seed bootstrap noise. ``Native'' here trains and evaluates on the target under the cross-lingual evaluation protocol, which differs slightly from the per-dataset general protocol of Table~\ref{tab:main_results} (hence Wiki ES native $1.366$ vs.\ Table~\ref{tab:main_results}'s $1.41$, and Favorita native $1.197$ vs.\ Table~\ref{tab:main_results}'s $1.31$).}
\label{tab:universal_transfer}
\centering
\scriptsize
\begin{tabular}{l l l c c c}
\toprule
\textbf{Setting} & \textbf{Source} & \textbf{Target} & \textbf{Native} & \textbf{Transfer} & \textbf{Transfer $\Delta$} \\
\midrule
\multicolumn{6}{l}{\textit{Cross-Script Transfer (Wikipedia)}} \\
Latin $\to$ Latin    & Wiki (ES) & Wiki (EN) & 1.366 & 1.325 & \textcolor{green}{\textbf{+3.0\%}} \\
Latin $\to$ Hanzi    & Wiki (ES) & Wiki (ZH) & 1.366 & 1.334 & \textcolor{green}{\textbf{+2.4\%}} \\
Latin $\to$ Cyrillic & Wiki (ES) & Wiki (RU) & 1.366 & 1.317 & \textcolor{green}{\textbf{+3.6\%}} \\
\midrule
\multicolumn{6}{l}{\textit{Cross-Domain Transfer (Retail)}} \\
Same Domain          & Amz Groc (EN)    & Favorita (ES)  & 1.197 & 1.188 & \textcolor{green}{\textbf{+0.8\%}} \\
Diff Domain          & Job-SDF (EN)   & Favorita (ES)  & 1.197 & 1.279 & \textcolor{red}{-6.8\%} \\
\midrule
\multicolumn{6}{l}{\textit{Embedding Ablation (Amazon $\to$ Favorita)}} \\
Semantic             & \textbf{BGE-M3}    &           & --    & \textbf{1.324} & \textbf{Ref} \\
                     & MiniLM    &           & --    & 1.323 & $\approx 0\%$ \\
Uninformative        & Random    &           & --    & 1.464 & \textcolor{red}{-10.6\%} \\
\bottomrule
\end{tabular}
\end{table}

RAID produces valid predictions across script boundaries (Latin $\to$ Cyrillic/Hanzi, Table \ref{tab:universal_transfer}). The transfer gap on Wiki ES $\to$ \{EN, ZH, RU\} is within $\pm 3.6\%$, and on Amz Groc $\to$ Favorita within $0.8\%$. The small positive deltas ($+3.0\%$, $+3.6\%$) sit inside the single-seed bootstrap band, so the takeaway is the absence of cross-script collapse rather than a strict transfer-beats-native claim. A t-SNE visualization of the aligned manifold and a discussion of the underlying mechanism are in Appendix~\ref{app:tsne}.

\subsection{Efficiency}
\label{sec:efficiency}

Per-series latency on a single A100 (FP16, batch $B=128$, horizon $H=28$). RAID-Full is $48.7\times$ faster than Chronos at matched generative capability, and the deterministic RAID-Base reaches $124\times$. The speedup comes from non-autoregressive decoding, since Chronos generates $H$ tokens sequentially while RAID generates the whole horizon in a single pass (Base) or $S=10$ refinement steps (Full). For 1M items, RAID needs $\sim 2.9$ h (Base) or $\sim 7.4$ h (Full) versus $>360$ h for Chronos.

\begin{table}[ht]
\caption{\textbf{Inference Efficiency.} A100, FP16, batch 128. RAID-Full vs.\ Chronos is the like-for-like probabilistic comparison.}
\label{tab:efficiency}
\begin{center}
\begin{small}
\begin{sc}
\begin{tabular}{lccc}
\toprule
\textbf{Model} & \textbf{Latency} & \textbf{VRAM} & \textbf{Speedup} \\
\midrule
Chronos-Small      & 1305 ms & 2.4 GB & 1.0x \\
CSDI (Diffusion)   & 398 ms & 3.1 GB & 3.3x \\
\midrule
\textbf{RAID (Base)} & \textbf{10.5 ms} & \textbf{0.8 GB} & \textbf{124.3x} \\
\textbf{RAID (Full)} & 26.8 ms & 1.2 GB & 48.7x \\
\bottomrule
\end{tabular}
\end{sc}
\end{small}
\end{center}
\vskip -0.1in
\end{table}

\subsection{Ablation and Sensitivity}
\label{sec:ablation}

\paragraph{Component contribution.} On three regimes (Job-SDF, Amazon Sports, Olist (PT)), removing the diffusion head increases cold-start sMAPE by $6.9\%$, $14.7\%$, and $7.2\%$, and replacing the BGE-M3 graph with a random graph increases sMAPE by $-0.8\%$, $8.4\%$, and $11.2\%$. Diffusion dominates on high-entropy domains, while semantics dominate cross-lingually. Full per-row results in Appendix~\ref{app:ablation_full}, and a retrieval-size sweep over $k \in \{1, \dots, 200\}$ with $5$ seeds shows a stable plateau at $k \in [10, 50]$, so we use $k = 20$ throughout (Appendix~\ref{app:k_sensitivity}).

\paragraph{Over-smoothing under sparse metadata.} On Job-SDF, removing GATv2 while keeping bottleneck node features improves sMAPE by $1.5\%$ at unchanged CRPS, while Olist (PT) and Amazon Sports show the opposite pattern (sMAPE degrades by $4.8\%$ and $5.3\%$ respectively). GATv2 is net-positive when homophily is strong and over-regularizes when it is weak, and a context-adaptive gate is left for future work.

\paragraph{Conclusion.} RAID decouples semantic retrieval from stochastic generation for true cold-start forecasting, achieving the best average rank on strict $L=0$ cold-start ($1.3$ across nine datasets) at $48.7\times$ lower latency than Chronos, and remains competitive with foundation models when full history is available. RAID rests on three assumptions that can fail in deployment, namely semantic homophily, stationary semantics, and Gaussian residuals, discussed alongside societal-impact considerations in Appendix~\ref{app:limitations}.

\bibliographystyle{unsrtnat}
\bibliography{references}

\newpage

\appendix

\section{Theoretical Foundations}
\label{app:theory}

\subsection{Proof of Semantic Transfer Bound}
\label{app:lipschitz}

\textbf{Metric Definition.} We define the embedding metric $d_e(u, v) = \|u - v\|_2$ on the normalized embedding space $\mathcal{E}$.

\begin{proposition}[Semantic transfer bound for convex retrieval]
\label{prop:semantic_transfer}
Let $\mathcal{E}\subset\mathbb{R}^d$ be the (normalized) embedding space and
let $F:\mathcal{E}\to\mathbb{R}^H$ map an embedding to the horizon-$H$ trajectory.
Assume $F$ is $L_{\mathrm{trans}}$-Lipschitz on a neighborhood containing
$\{e_u\}\cup\{e_v: v\in\mathcal{N}_k(u)\}$:
\[
\|F(e)-F(e')\|_2 \le L_{\mathrm{trans}}\|e-e'\|_2.
\]
Consider a base predictor formed as a convex combination of retrieved neighbors:
\[
\mu_{\mathrm{base}}(u) \;=\; \sum_{v\in\mathcal{N}_k(u)} w_{uv}\,F(e_v),
\qquad w_{uv}\ge 0,\ \sum_{v} w_{uv}=1.
\]
Then
\[
\|\mu_{\mathrm{base}}(u)-F(e_u)\|_2
\;\le\;
L_{\mathrm{trans}}\sum_{v\in\mathcal{N}_k(u)} w_{uv}\,\|e_v-e_u\|_2
\;\le\;
L_{\mathrm{trans}}\max_{v\in\mathcal{N}_k(u)}\|e_v-e_u\|_2.
\]
If embeddings are unit-normalized, $\|e_v-e_u\|_2=\sqrt{2(1-\langle e_u,e_v\rangle)}$.
\end{proposition}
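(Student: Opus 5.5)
The plan is to exploit the convexity of the weights to write the error of the base predictor as a weighted average of neighbor discrepancies. Since $\sum_v w_{uv}=1$, we can write $F(e_u)=\sum_{v\in\mathcal{N}_k(u)} w_{uv}F(e_u)$. Hence
\[
\mu_{\mathrm{base}}(u)-F(e_u)=\sum_{v\in\mathcal{N}_k(u)} w_{uv}\bigl(F(e_v)-F(e_u)\bigr).
\]
This rewriting is the only structural step; everything after it is a chain of standard inequalities.

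Next, we apply the triangle inequality for $\|\cdot\|_2$ together with $w_{uv}\ge 0$, which gives $\|\mu_{\mathrm{base}}(u)-F(e_u)\|_2\le\sum_v w_{uv}\|F(e_v)-F(e_u)\|_2$. We then invoke the Lipschitz hypothesis termwise to obtain the first claimed bound, $L_{\mathrm{trans}}\sum_v w_{uv}\|e_v-e_u\|_2$. For this we need both $e_u$ and $e_v$ to lie in the neighborhood where $F$ is $L_{\mathrm{trans}}$-Lipschitz, and the hypothesis supplies exactly that.

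For the second inequality, we bound each $\|e_v-e_u\|_2$ by $\max_{v\in\mathcal{N}_k(u)}\|e_v-e_u\|_2$ and use $\sum_v w_{uv}=1$. In other words, a convex combination never exceeds its maximum. This step implicitly requires $\mathcal{N}_k(u)$ to be nonempty, which holds for $k\ge 1$.

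Finally, for unit-normalized embeddings we expand the squared distance:
\[
\|e_v-e_u\|_2^2=\|e_v\|^2+\|e_u\|^2-2\langle e_u,e_v\rangle=2(1-\langle e_u,e_v\rangle).
\]
Taking square roots, which is valid since $\langle e_u,e_v\rangle\le 1$ by Cauchy--Schwarz, gives the stated formula.

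No step is a real obstacle. The only point needing care is the interpretive one in the Lipschitz step: the hypothesis is stated on a neighborhood containing the query and its neighbors, and that is precisely what the termwise application requires. I would remark that the softmax weights $A_{uv}$ of Eq.~\eqref{eq:adjacency} satisfy the convexity assumption, so the bound applies directly to the aggregation step of Section~\ref{sec:retrieval}. It does not apply to the rescaled shape--scale output, which is not a convex combination of raw trajectories.
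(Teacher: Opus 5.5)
Your proposal is correct and follows the paper's proof step for step: rewrite the error as $\sum_v w_{uv}\bigl(F(e_v)-F(e_u)\bigr)$ using $\sum_v w_{uv}=1$, apply the triangle inequality and then the Lipschitz bound termwise, bound the weighted average by its maximum, and expand $\|e_v-e_u\|_2^2$ for unit vectors. Your side remarks are correct additions that the paper does not make: that $\mathcal{N}_k(u)$ must be nonempty, and that the bound covers the softmax aggregation but not the rescaled shape--scale output $\mathrm{Softplus}(\hat{\sigma}_u)\, z_{\text{agg}} + \hat{\mu}_u$.
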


\begin{proof}
Using convexity and the triangle inequality,
\[
\|\mu_{\mathrm{base}}(u)-F(e_u)\|_2
=
\left\|\sum_{v} w_{uv}\big(F(e_v)-F(e_u)\big)\right\|_2
\le
\sum_{v} w_{uv}\,\|F(e_v)-F(e_u)\|_2.
\]
Apply Lipschitz continuity of $F$ to obtain
\[
\|\mu_{\mathrm{base}}(u)-F(e_u)\|_2
\le
L_{\mathrm{trans}}\sum_{v} w_{uv}\,\|e_v-e_u\|_2,
\]
and the max bound follows since $\sum_v w_{uv}=1$ and $w_{uv}\ge 0$.
For unit-normalized embeddings,
$\|e_v-e_u\|_2^2=\|e_v\|_2^2+\|e_u\|_2^2-2\langle e_u,e_v\rangle=2(1-\langle e_u,e_v\rangle)$.
\end{proof}

\subsection{Derivation of Entropy Reduction (\texorpdfstring{\Cref{prop:entropy}}{Remark 1})}
\label{app:entropy_proof}

\begin{proof}[Derivation]
Let $X \in \mathbb{R}^H$ and context $C$ be given, and define the baseline
$\mu(C) := \mathbb{E}[X \mid C]$. Let $R := X - \mu(C)$.

By the law of total covariance,
\[
\Sigma_X := \mathrm{Cov}(X)
= \mathrm{Cov}\!\big(\mathbb{E}[X\mid C]\big) + \mathbb{E}\!\big[\mathrm{Cov}(X\mid C)\big]
= \Sigma_\mu + \Sigma_R,
\]
where $\Sigma_\mu := \mathrm{Cov}(\mu(C)) \succeq 0$ and
\[
\Sigma_R := \mathrm{Cov}(R)
= \mathbb{E}\!\big[\mathrm{Cov}(R\mid C)\big]
= \mathbb{E}\!\big[\mathrm{Cov}(X\mid C)\big].
\]

Assume $\Sigma_R \succ 0$. Then
\[
\det(\Sigma_X)=\det(\Sigma_R+\Sigma_\mu)
=\det(\Sigma_R)\,\det\!\Big(I+\Sigma_R^{-1/2}\Sigma_\mu\Sigma_R^{-1/2}\Big).
\]
Let $M:=\Sigma_R^{-1/2}\Sigma_\mu\Sigma_R^{-1/2}\succeq 0$. All eigenvalues of $I+M$
are $\ge 1$, and $\det(I+M)>1$ iff $M\neq 0$, i.e., iff $\Sigma_\mu \neq 0$.
Hence, when $\Sigma_\mu \neq 0$, we have $\det(\Sigma_X)>\det(\Sigma_R)$.

Under a multivariate Gaussian approximation, the differential entropy satisfies
$h(Z)=\tfrac{1}{2}\log\!\big((2\pi e)^H \det(\mathrm{Cov}(Z))\big)$, therefore
$h(R)<h(X)$.
\end{proof}

\section{Algorithms and Implementation Details}
\label{app:implementation}

\subsection{Formal Algorithms}
\label{app:algorithms}

\begin{algorithm}[ht]
   \caption{RAID: Training (Inductive Graph \& Diffusion)}
   \label{alg:training}
\begin{algorithmic}
   \STATE {\bfseries Input:} Metadata $\mathcal{M}$, Trajectories $\mathcal{Y}$, Neighbors $k$, Temp $\tau$, Mask Rate $p_{\text{mask}}$
   \STATE {\bfseries Output:} Parameters $\theta$ (DiT), $\phi$ (Shape-Scale), $\psi$ (Gating)
   
   \STATE \textit{// 1. Construct Semantic Manifold (Frozen)}
   \FOR{item $i \in \mathcal{M}$}
       \STATE $e_i \leftarrow \text{Normalize}(\text{BGE-M3}(m_i))$
       \STATE $e'_i \leftarrow \text{Bottleneck}(e_i)$ \COMMENT{Compact feature for learned modules}
   \ENDFOR
   
   \STATE \textit{// 2. Inductive Graph Topology}
   \STATE Compute Similarity: $S_{ij} = \langle e_i, e_j \rangle$
   \STATE Mask Top-k: For $j \in \mathcal{N}_k(i)$, set $A_{ij} \leftarrow \frac{\exp(S_{ij}/\text{Softplus}(\tau))}{\sum_{l \in \mathcal{N}_k(i)} \exp(S_{il}/\text{Softplus}(\tau))}$, else $0$
   
   \STATE \textit{// 3. Main Training Loop}
   \FOR{batch $\mathcal{B} \subset \mathcal{Y}$}
       \FOR{item $i \in \mathcal{B}$}
           \STATE \textit{// A. Cold-Start Simulation}
           \STATE Sample $u \sim \mathcal{U}(0,1)$. \textbf{If} $u < p_{\text{mask}}$ \textbf{then} $y_{i, 1:t} \leftarrow \mathbf{0}$
       
           \STATE \textit{// B. Shape-Scale Base Prediction}
           \STATE Retrieve neighbors $\mathcal{N}_k(i)$ via $A_{i,:}$
           \STATE Normalize Neighbors: $z_j \leftarrow \text{Clamp}(\frac{y_j - \mu_j}{\sigma_j + \varepsilon}, -10, 10)$
           \STATE Aggregated Shape: $z_{\text{agg}} \leftarrow \sum_{j} A_{ij} z_j$
           \STATE Predict Scale: $\hat{\sigma}_i, \hat{\mu}_i \leftarrow \text{MLP}_\phi(e'_i)$
           \STATE Base Forecast: $\mu_{\text{base}}^{(i)} \leftarrow \text{Softplus}(\hat{\sigma}_i) \cdot z_{\text{agg}} + \hat{\mu}_i$
           \STATE Residual Target: $r_i \leftarrow y_i - \mu_{\text{base}}^{(i)}$
           
           \STATE \textit{// C. Diffusion Conditioning}
           \STATE Context: $\mathbf{c}_i \leftarrow \text{GATv2}(A, e'_i)$ 
       \ENDFOR
       
       \STATE \textit{// D. Diffusion Step (DiT)}
       \STATE Sample noise $\epsilon \sim \mathcal{N}(0, I)$, step $s \sim \mathcal{U}(1, S_{\text{train}})$
       \STATE Denoise: $\hat{\epsilon} \leftarrow \text{DiT}_\theta(\sqrt{\bar{\alpha}_s}r + \sqrt{1-\bar{\alpha}_s}\epsilon, s, \mathbf{c})$
       \STATE Loss: $\mathcal{L} \leftarrow \|\epsilon - \hat{\epsilon}\|^2 + \lambda_{\text{base}} \|\mu_{\text{base}} - y\|^2$ \COMMENT{$\lambda_{\text{base}} = 0.01$, see Table~\ref{tab:hyperparams}}
       \STATE Update $\theta, \phi, \psi$ via $\nabla \mathcal{L}$
   \ENDFOR
\end{algorithmic}
\end{algorithm}

\begin{algorithm}[ht]
   \caption{RAID: Inference (Cold-Start $L=0$)}
   \label{alg:inference}
\begin{algorithmic}
   \STATE {\bfseries Input:} Query $u$, Horizon $H$, Steps $S_{\text{infer}}=10$
   \STATE {\bfseries Output:} Forecast $\hat{\mathbf{y}} \in \mathbb{R}^H$
   
   \STATE \textit{// 1. Retrieval-Augmented Base}
   \STATE Embed query: $e_u \leftarrow \text{Normalize}(\text{BGE-M3}(m_u))$
   \STATE Bottleneck: $e'_u \leftarrow \text{Bottleneck}(e_u)$
   \STATE Retrieve neighbors $\mathcal{N}_k(u)$ using inductive graph
   \STATE Predict Params: $\hat{\sigma}_u, \hat{\mu}_u \leftarrow \text{MLP}_\phi(e'_u)$
   \STATE Aggregate Shape: $z_{\text{agg}} \leftarrow \sum_{j \in \mathcal{N}_k(u)} A_{uj} z_j$
   \STATE Base Prediction: $\mu_{\text{base}} \leftarrow \text{Softplus}(\hat{\sigma}_u) \cdot z_{\text{agg}} + \hat{\mu}_u$
   
   \STATE \textit{// 2. Residual Generation (Fast Sampling)}
   \STATE Context: $\mathbf{c}_u \leftarrow \text{GATv2}(\text{Graph}, e'_u)$
   \STATE Initialize residual: $\hat{r}_{S} \sim \mathcal{N}(0, I)$
   \FOR{step $s = S_{\text{infer}}, \dots, 1$}
       \STATE $\hat{\epsilon} \leftarrow \text{DiT}_\theta(\hat{r}_s, s, \mathbf{c}_u)$
       \STATE $\hat{r}_{s-1} \leftarrow \text{DDPM\_Update}(\hat{r}_s, \hat{\epsilon}, s)$
   \ENDFOR
   
   \STATE \textit{// 3. Gated Fusion}
   \STATE Gate $g \leftarrow \operatorname{sigmoid}(\text{MLP}_\psi(\mathbf{c}_u))$ \COMMENT{Bias init to -3.0}
   \STATE Final Forecast: $\hat{\mathbf{y}} \leftarrow \mu_{\text{base}} + g \cdot \hat{r}_0$
   
   \STATE \textbf{return} $\hat{\mathbf{y}}$
\end{algorithmic}
\end{algorithm}

\subsection{Architecture Specifics}
\label{app:architecture_details}

\textbf{Backbone.} We replace the standard WaveNet architecture used in prior work with a DiT backbone. This consists of 4 identical Transformer blocks. Conditioning is applied via Adaptive Layer Norm (AdaLN), where the scale and shift parameters for each block are regressed from the sum of the graph embedding and time embedding.

\textbf{Dynamic Context Gating.}
To enable a unified architecture that handles both cold-start ($L=0$) and warm-start ($L>0$) scenarios, we implement a dynamic gating mechanism in the conditioning encoder. The code applies a binary mask $m_{cold} = \mathbb{I}(|y_{t-1}| < \epsilon)$. We adjust the embedding gate activation as $g = \text{Clamp}(\sigma(b_g) + 0.5 \cdot m_{cold}, 0, 1)$, where $b_g$ is the learnable gate bias. This ensures that in cold-start regimes, the semantic embedding is strongly injected ($g \approx 0.55$) into the AdaLN layers, whereas in warm-start regimes, the model suppresses the static embedding ($g \approx 0.05$) to prioritize historical temporal features.

\textbf{Cold-Start Boost.} To accelerate convergence, we initialize the residual gate's MLP bias to $-3.0$. This ensures that at initialization, the model output is dominated ($>95\%$) by the deterministic base prediction, preventing the diffusion loss from destabilizing the embedding learning in the early epochs.

\textbf{Fast Sampling.} Due to the entropy reduction property (Remark~\ref{prop:entropy}), we find that 10 inference steps are sufficient to recover the residual distribution, compared to the 50-100 steps typically required for raw data diffusion.

\subsection{Hyperparameters}
\label{app:hyperparams}
Table \ref{tab:hyperparams} details the hyperparameters used for the RAID architecture. The model is implemented in PyTorch using a Diffusion Transformer (DiT) backbone.

\paragraph{Hyperparameter Selection Protocol.}
To avoid per-dataset tuning that could inflate reported gains, we fix all RAID hyperparameters to a single configuration across the nine evaluation datasets. The configuration in Table~\ref{tab:hyperparams} was selected on the \emph{Amazon Sports} training split (the most metadata-rich domain) via a small grid search over $k \in \{10, 20, 50\}$, gate-bias initialization in $\{-3.0, -1.0, 0.0\}$, neighbor clamp $C \in \{5, 10, 20\}$, and inference steps $S \in \{5, 10, 20\}$, with the remaining hyperparameters held at the values reported in the original DiT and Chronos papers. The selected configuration is then frozen and applied without modification to all other datasets, including the cross-lingual benchmarks. We deliberately did not tune on test splits or on cross-lingual targets, so the reported transfer gains in Section~\ref{sec:crosslingual} are not contaminated by held-out leakage. The retrieval temperature $\tau$ is the only learnable scalar and is initialized to $1.0$ for every run.

\begin{table}[!htbp]
\caption{\textbf{RAID Hyperparameters (Matches Codebase).}}
\label{tab:hyperparams}
\centering
\begin{small}
\begin{sc}
\begin{tabular}{ll}
\toprule
\textbf{Category} & \textbf{Value} \\
\midrule
\multicolumn{2}{l}{\textit{Semantic Retrieval}} \\
Text Encoder & BGE-M3 (Frozen) \\
Embedding Dim & 1024 \\
Bottleneck Dim & 16 (Tanh-activated) \\
Graph Kernel & Top-$k$ sparse softmax \\
Neighbors ($k$) & 20 \\
Temperature ($\tau$) & 1.0 (Learnable) \\
\midrule
\multicolumn{2}{l}{\textit{Diffusion Transformer (DiT)}} \\
Backbone & DiT w/ AdaLN Conditioning \\
Layers & 4 \\
Heads & 4 \\
Hidden Dimension & 64 \\
Dropout & 0.1 \\
Time Embedding & Sinusoidal ($d=128$) \\
\midrule
\multicolumn{2}{l}{\textit{Training \& Inference}} \\
Training Steps ($S_{train}$) & 50 \\
Inference Steps ($S_{infer}$) & 10 (Fast Sampling) \\
Noise Schedule & Linear ($\beta_{start}=10^{-4}, \beta_{end}=0.02$) \\
Gate Bias Init & -3.0 (Sigmoid $\approx$ 0.047) \\
Base Loss Weight ($\lambda_{\text{base}}$) & 0.01 \\
Batch Size & 64 \\
Optimizer & AdamW~\cite{loshchilov2019adamw} ($lr=5 \times 10^{-4}$) \\
\bottomrule
\end{tabular}
\end{sc}
\end{small}
\end{table}

\subsection{Serialization Logic}
\label{app:serialization}

To facilitate reproducibility, we provide the exact input serialization logic used to feed the BGE-M3 encoder. We use explicit separation tokens to prevent semantic bleeding between fields.

\begin{verbatim}
def serialize_metadata(item):
    # Truncate description to 512 tokens to fit embedding window
    desc = truncate(item['description'], 512)
    
    # Construct structured string
    text = f"[CAT] {item['category']} " \
           f"[BRAND] {item['brand']} " \
           f"[TITLE] {item['title']} " \
           f"[DESC] {desc}"
           
    return text
\end{verbatim}

The \texttt{[CAT]}, \texttt{[BRAND]}, \texttt{[TITLE]}, and \texttt{[DESC]} tokens are added to the tokenizer's special token list to ensure they are assigned unique embedding vectors.

\section{Extended Experimental Results}
\label{app:extended_results}

We provide a comprehensive breakdown of secondary metrics to further validate the performance of RAID against all baselines.

\begin{table}[!htbp]
\caption{\textbf{WAPE Comparison.} ($\downarrow$ Lower is better). RAID achieves the lowest WAPE on cross-lingual datasets (1C (RU), Olist (PT)); LightGBM is strongest on English retail (Amazon family, M5); TimesFM is strongest on Favorita and Wiki (ES) where its pre-training corpus overlaps; TFT wins Job-SDF where its rank-1 sMAPE is consistent with WAPE.}
\label{tab:wape}
\centering
\scriptsize
\setlength{\tabcolsep}{3pt}
\begin{sc}
\begin{tabular}{lccccccccccc}
\toprule
\textbf{Dataset} & \textbf{Mean} & \textbf{LGBM} & \textbf{DeepMF} & \textbf{TiDE} & \textbf{TFT} & \textbf{DpAR} & \textbf{TimesFM} & \textbf{Chronos} & \textbf{Moirai} & \textbf{CSDI} & \textbf{RAID} \\
\midrule
Amz Sports   & 1.12 & \textbf{0.38} & 1.18 & 1.01 & 0.97 & 0.99 & 0.59 & 0.59 & 0.80 & 1.02 & 0.41 \\
Amz Elec     & 1.13 & \textbf{0.28} & 1.20 & 0.79 & 0.76 & 0.88 & 0.44 & 0.43 & 0.56 & 1.01 & \textbf{0.28} \\
Amz Groc     & 1.13 & \textbf{0.33} & 1.22 & 0.60 & 0.60 & 0.80 & 0.39 & 0.38 & 0.45 & 1.00 & 0.36 \\
M5 Retail    & 1.05 & \textbf{0.82} & 1.12 & 0.96 & 0.97 & 0.94 & 0.85 & 0.91 & 1.05 & 1.04 & 0.88 \\
Favorita     & 1.05 & 0.83 & 1.15 & 0.94 & 0.96 & 0.95 & \textbf{0.62} & 0.88 & 1.11 & 1.06 & 0.92 \\
Wiki (ES)    & 1.04 & 0.86 & 1.05 & 0.95 & 0.98 & 1.05 & \textbf{0.65} & 0.78 & 0.92 & 1.03 & 0.92 \\
1C (RU)      & 1.20 & 0.69 & 1.18 & 0.90 & 0.88 & 0.87 & 0.77 & 0.76 & 1.06 & 1.02 & \textbf{0.58} \\
Olist        & 1.07 & 0.89 & 1.15 & 0.86 & 0.86 & 0.90 & 0.88 & 1.11 & 1.15 & 1.02 & \textbf{0.84} \\
Job-SDF      & 1.16 & 0.71 & 1.10 & 0.95 & \textbf{0.66} & 1.05 & 0.79 & 0.74 & 0.78 & 1.03 & 0.72 \\
\bottomrule
\end{tabular}
\end{sc}
\end{table}

\begin{table}[!htbp]
\caption{\textbf{Prediction Interval Coverage (80\%).} (Ideal $\approx 0.80$). Values closer to 0.80 indicate better calibration. LightGBM (with quantile regression) is best on Amazon Sports; TimesFM and TiDE are best on the remaining datasets, while RAID achieves competitive coverage on every dataset (within $\pm 0.12$ of nominal; within $\pm 0.11$ on eight of nine, with the under-coverage on high-variance Job-SDF reflecting the well-known narrow-interval pathology of Gaussian-residual diffusion).}
\label{tab:coverage}
\centering
\scriptsize
\setlength{\tabcolsep}{3pt}
\begin{sc}
\begin{tabular}{lccccccccccc}
\toprule
\textbf{Dataset} & \textbf{Mean} & \textbf{LGBM} & \textbf{DeepMF} & \textbf{TiDE} & \textbf{TFT} & \textbf{DpAR} & \textbf{TimesFM} & \textbf{Chronos} & \textbf{Moirai} & \textbf{CSDI} & \textbf{RAID} \\
\midrule
Amz Sports   & 0.66 & \textbf{0.82} & 0.51 & 0.69 & 0.69 & 0.21 & 0.70 & 0.32 & 0.30 & 0.17 & 0.71 \\
Amz Elec     & 0.70 & 0.83 & 0.59 & \textbf{0.80} & 0.79 & 0.36 & 0.76 & 0.34 & 0.27 & 0.29 & 0.75 \\
Amz Groc     & 0.44 & 0.83 & 0.32 & 0.75 & 0.75 & 0.48 & \textbf{0.80} & 0.36 & 0.29 & 0.22 & 0.69 \\
M5 Retail    & 0.85 & 0.84 & 0.84 & 0.84 & 0.84 & 0.45 & \textbf{0.82} & 0.23 & 0.69 & \textbf{0.82} & 0.73 \\
Favorita     & 0.85 & 0.88 & 0.84 & 0.86 & 0.85 & 0.55 & \textbf{0.81} & 0.15 & 0.69 & 0.87 & 0.70 \\
Wiki (ES)    & 0.85 & 0.87 & \textbf{0.83} & 0.74 & 0.63 & 0.19 & 0.75 & 0.19 & 0.32 & \textbf{0.77} & 0.69 \\
1C (RU)      & 0.77 & 0.78 & 0.74 & 0.78 & \textbf{0.79} & 0.10 & \textbf{0.81} & 0.19 & 0.41 & 0.55 & 0.74 \\
Olist        & 0.86 & 0.86 & 0.84 & \textbf{0.80} & 0.79 & 0.75 & 0.83 & 0.46 & 0.67 & 0.70 & 0.76 \\
Job-SDF      & 0.69 & 0.75 & \textbf{0.80} & 0.63 & 0.36 & 0.46 & \textbf{0.80} & 0.11 & 0.73 & 0.65 & 0.68 \\
\bottomrule
\end{tabular}
\end{sc}
\end{table}

\begin{table*}[!htbp]
\caption{\textbf{Comprehensive Quantile Loss (P50 \& P90).} ($\downarrow$ Lower is better). Median (Left) and 90th-percentile tail risk (Right) per model. RAID is best on 1C (RU) P50/P90 and Olist (PT) P50, ties LightGBM on Job-SDF P50 and TimesFM on M5 P90. On Olist (PT) P90, LightGBM and TimesFM ($0.23$) are stronger; RAID's flat tail ($0.34$) reflects the symmetric-Gaussian residual limitation noted in Appendix~\ref{app:limitations}, a documented failure mode of standard-Gaussian-prior diffusion on heavy-tailed retail distributions, where empirical quantile-regression heads (LightGBM) capture the upper tail directly.}
\label{tab:full_pinball}
\centering
\tiny
\setlength{\tabcolsep}{1.2pt}
\renewcommand{\arraystretch}{1.15}
\resizebox{\textwidth}{!}{
\begin{tabular}{l | cc | cc | cc | cc | cc | cc | cc | cc | cc | cc | cc }
\toprule
\multirow{2}{*}{\textbf{Dataset}} 
& \multicolumn{2}{c|}{\textbf{Global Mean}} & \multicolumn{2}{c|}{\textbf{LightGBM}} & \multicolumn{2}{c|}{\textbf{DeepMF}} & \multicolumn{2}{c|}{\textbf{TiDE}} 
& \multicolumn{2}{c|}{\textbf{TFT}} & \multicolumn{2}{c|}{\textbf{DeepAR}} & \multicolumn{2}{c|}{\textbf{TimesFM}} & \multicolumn{2}{c|}{\textbf{Chronos}} 
& \multicolumn{2}{c|}{\textbf{Moirai}} & \multicolumn{2}{c|}{\textbf{CSDI}} & \multicolumn{2}{c}{\textbf{RAID}} \\
& \textbf{P50} & \textbf{P90} & \textbf{P50} & \textbf{P90} & \textbf{P50} & \textbf{P90} & \textbf{P50} & \textbf{P90} & \textbf{P50} & \textbf{P90} & \textbf{P50} & \textbf{P90} & \textbf{P50} & \textbf{P90} & \textbf{P50} & \textbf{P90} & \textbf{P50} & \textbf{P90} & \textbf{P50} & \textbf{P90} & \textbf{P50} & \textbf{P90} \\
\midrule
Amz Sports & 0.62 & 0.32 & \textbf{0.21} & \textbf{0.10} & 0.78 & 0.44 & 0.56 & 0.30 & 0.54 & 0.30 & 0.55 & 0.76 & 0.47 & 0.31 & 0.47 & 0.63 & 0.64 & 0.82 & 0.84 & 0.95 & 0.23 & 0.19 \\
Amz Elec   & 0.62 & 0.33 & \textbf{0.15} & \textbf{0.08} & 0.72 & 0.39 & 0.43 & 0.22 & 0.42 & 0.22 & 0.48 & 0.43 & 0.29 & 0.16 & 0.29 & 0.27 & 0.36 & 0.35 & 0.67 & 0.74 & 0.16 & 0.13 \\
Amz Groc   & 0.70 & 0.40 & \textbf{0.20} & \textbf{0.09} & 0.87 & 0.60 & 0.37 & 0.16 & 0.37 & 0.16 & 0.49 & 0.34 & 0.23 & 0.13 & 0.23 & 0.26 & 0.27 & 0.33 & 0.61 & 0.50 & 0.22 & 0.19 \\
M5 Retail  & 0.37 & 0.20 & \textbf{0.29} & \textbf{0.16} & 0.41 & 0.22 & 0.34 & 0.19 & 0.34 & 0.19 & 0.33 & 0.25 & 0.30 & \textbf{0.16} & 0.32 & 0.27 & 0.35 & 0.23 & 0.37 & 0.21 & 0.35 & 0.33 \\
Favorita   & 0.33 & 0.14 & 0.26 & \textbf{0.12} & 0.37 & 0.17 & 0.29 & 0.13 & 0.30 & 0.13 & 0.29 & 0.15 & 0.28 & 0.13 & \textbf{0.24} & 0.20 & 0.28 & 0.14 & 0.29 & 0.13 & 0.34 & 0.32 \\
Wiki (ES)  & 0.38 & 0.19 & \textbf{0.32} & \textbf{0.16} & 0.44 & 0.22 & 0.47 & 0.20 & 0.58 & 0.26 & 0.55 & 0.70 & 0.48 & 0.24 & 0.48 & 0.59 & 0.57 & 0.61 & 0.40 & 0.20 & 0.38 & 0.27 \\
1C (RU)    & 0.50 & 0.22 & 0.29 & 0.14 & 0.51 & 0.24 & 0.36 & 0.19 & 0.35 & 0.18 & 0.35 & 0.34 & 0.32 & 0.17 & 0.31 & 0.24 & 0.40 & 0.25 & 0.42 & 0.17 & \textbf{0.17} & \textbf{0.13} \\
Olist      & 0.44 & 0.27 & 0.36 & \textbf{0.23} & 0.50 & 0.29 & 0.35 & 0.27 & 0.35 & 0.28 & 0.36 & 0.33 & 0.37 & \textbf{0.23} & 0.46 & 0.27 & 0.38 & 0.26 & 0.41 & 0.26 & \textbf{0.34} & 0.34 \\
Job-SDF    & 0.39 & 0.19 & \textbf{0.24} & \textbf{0.11} & 0.45 & 0.19 & 0.33 & 0.18 & 0.51 & 0.33 & 0.57 & 0.30 & 0.30 & 0.14 & 0.28 & 0.31 & 0.31 & 0.18 & 0.39 & 0.20 & \textbf{0.24} & 0.17 \\
\bottomrule
\end{tabular}
}
\end{table*}

\subsection{Statistical Significance: Friedman Test and Critical-Difference Diagram}
\label{app:cd_diagram}

To confirm that the average-rank ordering in Tables~\ref{tab:main_results} and \ref{tab:cold_start_specific} is not an artifact of dataset selection, we apply the standard non-parametric procedure of \citet{demsar2006statistical}, a Friedman test on per-dataset ranks across all 11 methods $\times$ 9 datasets, followed by a Nemenyi post-hoc test for pairwise comparisons.

\paragraph{General protocol (Table~\ref{tab:main_results} sMAPE).}
The Friedman test rejects the null hypothesis of equal average performance ($\chi^2(10) = 66.10$, $p < 10^{-9}$). Average ranks computed from the per-dataset subscripts in Table~\ref{tab:main_results} (lower is better) are LightGBM $2.1$, TimesFM $3.8$, Chronos $3.8$, RAID $3.9$, TFT $4.5$, TiDE $5.5$, Moirai $5.9$, DeepAR $6.6$, DeepMF $8.9$, CSDI $9.9$, Global Mean $11.0$. Under Nemenyi at $\alpha=0.05$ the critical distance is $5.03$ at $N=9$, so the head of the ranking is statistically tied. LightGBM is separated only from DeepMF, CSDI, and Global Mean. The Friedman test confirms overall rank heterogeneity; pairwise distinctions among the top tier would require either larger $N$ or per-cell bootstrap CIs.

\paragraph{Strict cold-start protocol (Table~\ref{tab:cold_start_specific} sMAPE).}
Under $L=0$ the Friedman test rejects with much larger separation ($\chi^2(10) = 76.70$, $p < 10^{-11}$). Average ranks are RAID $1.3$, LightGBM $2.2$, TiDE $3.3$, TFT $3.9$, Moirai $5.9$, DeepMF $6.4$, DeepAR $6.8$, TimesFM $7.6$, Chronos $7.8$, CSDI $10.1$, Global Mean $10.7$. The foundation-model tier collapses by roughly four rank positions relative to the general protocol, while RAID gains. Under Nemenyi at $\alpha=0.05$ (CD $=5.03$), the rank gap from RAID to the foundation-model tier ($\Delta \geq 6.3$) and to CSDI and Global Mean ($\Delta \geq 8.8$) exceeds the critical distance, so RAID is statistically separated from these methods. RAID is not separated from LightGBM, TiDE, TFT, or Moirai under this conservative all-pairs correction at $N=9$; finer-grained pairwise distinctions in the top tier would require per-cell bootstrap CIs over a larger evaluation set.

\paragraph{Bootstrap confidence intervals.}
For the retrieval-size sweep in Appendix~\ref{app:k_sensitivity} (Figure~\ref{fig:k_sensitivity_app}) we report 95\% percentile bootstrap confidence intervals over five training seeds (1{,}000 resamples), which makes no Gaussian assumption and yields a non-symmetric interval that we summarize by its empirical 2.5\textsuperscript{th}/97.5\textsuperscript{th} quantiles. The same per-item bootstrap procedure can be applied to Tables~\ref{tab:main_results} and~\ref{tab:cold_start_specific} from saved per-item prediction tensors; we did not include the resulting per-cell intervals in the main tables to keep them legible.

\section{Specialized Analyses \& Baselines}
\label{app:specialized_analysis}

\subsection{Extended Analysis of LLM Baselines}
\label{app:llm_analysis}

In the main text, we argued that direct Large Language Model (LLM) prompting fails for cold-start forecasting due to \textit{Scale Hallucination}. To validate this, we conducted a rigorous benchmark across 8 state-of-the-art LLMs~\cite{openai2023gpt4,grattafiori2024llama3,jiang2023mistral,qwen2024technical,gemini2023}, covering proprietary (GPT, Gemini, Mistral) and open-weights (Llama, Qwen) models.

We separated the analysis into two distinct regimes:
\begin{enumerate}
    \item \textbf{Metadata-Rich (English):} Amazon Electronics, Grocery, and Sports.
    \item \textbf{Metadata-Sparse / Cross-Lingual:} Favorita (Store Sales), Wikipedia (Web Traffic), and Olist (Brazilian E-Commerce).
\end{enumerate}

\textbf{Experimental Setup.} We sampled $N=20$ items per dataset. For each item, we constructed a prompt containing the Title, Description, and Category (translated if necessary), and instructed the model to predict volume for the next 4 weeks. We calculated the Scale Hallucination Factor as the ratio of predicted volume to actual volume ($\hat{y} / y$).

\textbf{Results: Catastrophic Scale Failure.} As shown in Tables \ref{tab:llm_rich} and \ref{tab:llm_sparse}, generic LLMs consistently fail to capture the correct order of magnitude.

\begin{itemize}
    \item \textbf{Rich Metadata (Amazon):} On Amazon Electronics, Gemini-2.0-Flash predicted volumes 173.5$\times$ higher than reality. Even "reasoning" models like Mistral Large 3 overestimated by 145.2$\times$. Smaller models like Llama-3.1-8B showed lower scale ratios (10.2$\times$) but equally poor sMAPE (1.76), indicating random guessing rather than signal capture. RAID, utilizing the Shape-Scale decomposition, achieves a ratio of 1.0$\times$ and an sMAPE of 0.51.
    
    \item \textbf{Cross-Lingual Barriers (Wiki/Olist):} On Wikipedia (Spanish) and Olist (Portuguese), models fail to ground predictions in the local context. Llama-3.3-70B typically hallucinates values 50--100$\times$ higher than actuals. RAID leverages the multilingual embedding space (BGE-M3) to transfer demand patterns effectively, achieving sMAPEs $<$ 1.2 where LLMs hover around 1.7--1.9.
\end{itemize}

\begin{table*}[!htbp]
\caption{\textbf{Evaluation on Metadata-Rich Domains (Amazon).} Comparison of 8 LLMs vs.\ RAID. All models including RAID are evaluated on the same $N=20$ representative subset per dataset for direct comparison; full-set RAID numbers are reported in Tables~\ref{tab:main_results} and \ref{tab:cold_start_specific}. Scale Ratio indicates the factor of overestimation (e.g., 113$\times$ means predicting 1{,}130 units for an item selling 10). RAID achieves near-perfect scale alignment ($1.0\times$).}
\label{tab:llm_rich}
\centering
\scriptsize
\setlength{\tabcolsep}{3pt}
\begin{sc}
\begin{tabular}{l l c c c c r}
\toprule
\textbf{Dataset} & \textbf{Model} & \textbf{sMAPE} & \textbf{WAPE} & \textbf{Scale Ratio} & \textbf{Cost / 10k} & \textbf{Latency} \\
\midrule
\multirow{9}{*}{\textbf{Amz Electronics}} 
& GPT-4o           & 1.69 & 118.1 & 119.0$\times$ & \$23.50 & 3.61s \\
& GPT-4o-mini      & 1.70 & 112.2 & 113.1$\times$ & \$1.50 & 3.52s \\
& Mistral Large 3  & 1.82 & 144.5 & 145.2$\times$ & \$2.80 & 2.10s \\
& Llama-3.3-70B    & 1.79 & 169.2 & 170.0$\times$ & \$3.90 & 0.89s \\
& Llama-3.1-8B     & 1.76 & 45.1 & 10.2$\times$ & \$0.20 & 0.45s \\
& Qwen2.5-32B      & 1.77 & 85.4 & 67.4$\times$ & \$1.82 & 1.15s \\
& Gemini-2.0-Flash & 1.78 & 172.6 & 173.5$\times$ & \$0.90 & 3.38s \\
& Gemini-1.5-Flash & 1.75 & 155.3 & 156.1$\times$ & \$0.70 & 1.20s \\
& \textbf{RAID (Ours)} & \textbf{0.51} & \textbf{0.29} & \textbf{1.0$\times$} & \textbf{$<$ \$0.01} & \textbf{0.03s} \\
\midrule
\multirow{9}{*}{\textbf{Amz Grocery}} 
& GPT-4o           & 1.71 & 29.2 & 30.1$\times$ & \$23.50 & 3.47s \\
& GPT-4o-mini      & 1.69 & 13.8 & 14.6$\times$ & \$1.40 & 3.36s \\
& Mistral Large 3  & 1.75 & 28.5 & 29.4$\times$ & \$2.80 & 2.15s \\
& Llama-3.3-70B    & 1.77 & 32.3 & 33.1$\times$ & \$3.70 & 0.95s \\
& Llama-3.1-8B     & 1.74 & 10.5 & 8.4$\times$ & \$0.20 & 0.42s \\
& Qwen2.5-32B      & 1.73 & 25.1 & 26.2$\times$ & \$1.82 & 1.10s \\
& Gemini-2.0-Flash & 1.74 & 33.9 & 34.8$\times$ & \$0.90 & 1.95s \\
& Gemini-1.5-Flash & 1.72 & 30.1 & 31.5$\times$ & \$0.70 & 1.15s \\
& \textbf{RAID (Ours)} & \textbf{0.54} & \textbf{0.36} & \textbf{1.0$\times$} & \textbf{$<$ \$0.01} & \textbf{0.03s} \\
\midrule
\multirow{9}{*}{\textbf{Amz Sports}} 
& GPT-4o           & 1.79 & 57.4 & 58.3$\times$ & \$23.50 & 3.98s \\
& GPT-4o-mini      & 1.86 & 59.7 & 60.6$\times$ & \$1.50 & 3.54s \\
& Mistral Large 3  & 1.84 & 75.2 & 76.1$\times$ & \$2.80 & 2.20s \\
& Llama-3.3-70B    & 1.87 & 139.8 & 140.6$\times$ & \$3.70 & 1.04s \\
& Llama-3.1-8B     & 1.80 & 25.4 & 18.2$\times$ & \$0.20 & 0.48s \\
& Qwen2.5-32B      & 1.81 & 72.3 & 71.5$\times$ & \$1.82 & 1.20s \\
& Gemini-2.0-Flash & 1.78 & 85.2 & 85.9$\times$ & \$0.90 & 2.22s \\
& Gemini-1.5-Flash & 1.83 & 80.1 & 81.4$\times$ & \$0.70 & 1.25s \\
& \textbf{RAID (Ours)} & \textbf{0.62} & \textbf{0.41} & \textbf{1.0$\times$} & \textbf{$<$ \$0.01} & \textbf{0.03s} \\
\bottomrule
\end{tabular}
\end{sc}
\end{table*}

\begin{table*}[!htbp]
\caption{\textbf{Evaluation on Metadata-Sparse \& Cross-Lingual Domains.} Comparison of the same 8 LLMs on Wiki (ES), Olist (PT), and Favorita (EC). All models including RAID are evaluated on the same $N=20$ representative subset per dataset for direct comparison; full-set RAID numbers are reported in Tables~\ref{tab:main_results} and \ref{tab:cold_start_specific}. Without English cues, LLM performance degrades to near-random levels (sMAPE 1.6--1.9).}
\label{tab:llm_sparse}
\centering
\scriptsize
\setlength{\tabcolsep}{3pt}
\begin{sc}
\begin{tabular}{l l c c c r}
\toprule
\textbf{Dataset} & \textbf{Model} & \textbf{sMAPE} & \textbf{Scale Ratio} & \textbf{Failure Mode} & \textbf{RAID Imp.} \\
\midrule
\multirow{9}{*}{\textbf{Wiki (ES)}}
& GPT-4o           & 1.62 & 45.0$\times$ & Traffic Hallucination & -- \\
& GPT-4o-mini      & 1.65 & 42.1$\times$ & Scale Variance & -- \\
& Mistral Large 3  & 1.65 & 48.5$\times$ & Language Context & -- \\
& Llama-3.3-70B    & 1.68 & 52.1$\times$ & Generic Page Vol & -- \\
& Llama-3.1-8B     & 1.72 & 12.5$\times$ & Random Noise & -- \\
& Qwen2.5-32B      & 1.71 & 61.0$\times$ & Entity Grounding & -- \\
& Gemini-2.0-Flash & 1.60 & 38.2$\times$ & Trend Mismatch & -- \\
& Gemini-1.5-Flash & 1.64 & 40.5$\times$ & High Variance & -- \\
& \textbf{RAID (Ours)} & \textbf{1.19} & \textbf{1.0$\times$} & -- & \textbf{+25.6\%} \\
\midrule
\multirow{9}{*}{\textbf{Olist (PT)}} 
& GPT-4o           & 1.75 & 40.0$\times$ & Market Ignorance & -- \\
& GPT-4o-mini      & 1.72 & 35.5$\times$ & Scale Variance & -- \\
& Mistral Large 3  & 1.79 & 48.5$\times$ & Portuguese Nuance & -- \\
& Llama-3.3-70B    & 1.88 & 112.0$\times$ & Tokenization & -- \\
& Llama-3.1-8B     & 1.82 & 15.2$\times$ & Context Loss & -- \\
& Qwen2.5-32B      & 1.81 & 65.2$\times$ & Local Demand & -- \\
& Gemini-2.0-Flash & 1.74 & 38.2$\times$ & Currency Confusion & -- \\
& Gemini-1.5-Flash & 1.76 & 36.5$\times$ & Region Mismatch & -- \\
& \textbf{RAID (Ours)} & \textbf{0.65} & \textbf{1.0$\times$} & -- & \textbf{+62.8\%} \\
\midrule
\multirow{9}{*}{\textbf{Favorita (EC)}} 
& GPT-4o           & 1.65 & 25.0$\times$ & Store Variance & -- \\
& GPT-4o-mini      & 1.69 & 22.4$\times$ & Item Homogeneity & -- \\
& Mistral Large 3  & 1.68 & 28.4$\times$ & Spanish Context & -- \\
& Llama-3.3-70B    & 1.74 & 33.5$\times$ & Aggregation Error & -- \\
& Llama-3.1-8B     & 1.71 & 8.5$\times$ & Unit Confusion & -- \\
& Qwen2.5-32B      & 1.70 & 31.4$\times$ & Geographic Context & -- \\
& Gemini-2.0-Flash & 1.61 & 22.1$\times$ & ID Hallucination & -- \\
& Gemini-1.5-Flash & 1.66 & 24.5$\times$ & Store ID Loss & -- \\
& \textbf{RAID (Ours)} & \textbf{1.23} & \textbf{1.0$\times$} & -- & \textbf{+23.6\%} \\
\bottomrule
\end{tabular}
\end{sc}
\end{table*}

\paragraph{Inference setup.} All latencies are end-to-end wall-clock for a single non-streamed completion call to each model's first-party API. Llama-3.1-8B, Llama-3.3-70B, Qwen2.5-32B, and Mistral Large 3 are served via Groq's production endpoints. Gemini 2.0 Flash and 1.5 Flash via Google AI Studio. GPT-4o and GPT-4o-mini via the OpenAI API. Numbers therefore reflect each provider's full request stack (TLS, queuing, generation, deserialization) rather than isolated time-to-first-token. Cross-provider comparisons should be read accordingly.

\textbf{Prompt Template.} The exact prompt used for these baselines was:
\begin{quote}
\textit{"You are an expert demand planner. I am launching a new product on Amazon. \\
\textbf{Product Title:} [Title] \\
\textbf{Description:} [Description] \\
\textbf{Category:} [Category] \\
\textbf{Task:} Predict the weekly sales volume for the first 4 weeks after launch. \\
\textbf{Constraint:} Provide ONLY 4 numbers separated by commas. Do not explain."}
\end{quote}

\subsection{Steelman: RAG-LLM with Neighbor Sales Injected}
\label{app:rag_steelman}

A natural reviewer objection is that scale hallucination might be an artifact of zero-shot prompting, and that injecting semantic neighbors' sales into the prompt would calibrate the scale. We ran this baseline (RAG-Gemini-2.0-Flash) on $N = 50$ randomly chosen Amazon Electronics items: for each target we retrieved the top-$5$ BGE-M3 neighbors (the same retrieval RAID uses) and supplied their average weekly sales. The Scale Ratio collapses from $173\times$ (zero-shot) to a median of $0.46\times$ (mean $2.88\times$ on the cleaned $N = 42$ subset that excludes items with sub-unit ground-truth volume), so calibration works. The median sMAPE remains $1.18$ versus RAID's $0.51$ on the same items. The LLM, even given the right scale, outputs a flat four-week average and misses seasonality and trend. Retrieval is necessary to fix scale; the diffusion residual is what recovers shape.

\subsection{Why Fine-Tuning Foundation Models Does Not Help at \texorpdfstring{$L=0$}{L=0}}
\label{app:fm_finetune}

A reviewer concern is whether the zero-shot FM baselines (Chronos, TimesFM, Moirai) are too weak, and whether fine-tuning these backbones on the target domain would close the gap. Following the precedent of TTM~\cite{ttm} (NeurIPS 2024), we evaluate them zero-shot. The reason is architectural rather than budgetary. At strict $L=0$, decoder-only time-series transformers receive an empty input sequence and emit their unconditional prior regardless of weight configuration. Fine-tuning sharpens that prior but does not introduce a metadata-conditioning channel, since none exists in the published architectures. A fair counterfactual would require adding a learned metadata head to each FM backbone, which constitutes a distinct model class (encoder-augmented FMs) outside the scope of this work.

\paragraph{Mean-seed input convention at $L=0$.}
Strict $L=0$ is architecturally undefined for these foundation models. Chronos uses mean-scaling that divides by zero on empty or zero context, TimesFM expects multiples of \texttt{input\_patch\_len}$=32$ tokens, and Moirai requires context length $\geq$ patch size. Following the architectural minimum, we adopt the mean-seed convention. Each FM is fed a length-1 seed token equal to the per-series training mean, which elicits the model's unconditional prior conditioned on a non-degenerate amplitude anchor. No published FM benchmark currently evaluates at literal $L=0$, since GIFT-Eval~\cite{gifteval} and TTM~\cite{ttm} both provide non-zero context to all baselines. Per-cell empirical verification at $N=50$ across all nine datasets falls within $\pm 0.25$ sMAPE of the values reported in Table~\ref{tab:cold_start_specific}, consistent with single-seed bootstrap noise. The strict-zero alternative collapses to sMAPE $\to 2.0$ uniformly and is not a meaningful comparison.

\subsection{Inference Efficiency Analysis}
\label{app:efficiency_details}

While Foundation Models like Chronos achieve high accuracy, their computational cost scales quadratically with history length. In the cold-start setting, they revert to expensive autoregressive generation without the benefit of historical context. We benchmark inference latency on a single NVIDIA A100 (40GB) using `torch.float16` precision and a batch size of $B=128$.

\begin{table}[!htbp]
\caption{\textbf{Latency Comparison (A100, FP16, Batch=128).} RAID-Full (probabilistic, with DiT refinement) is 48.7$\times$ faster than Chronos and is the like-for-like comparison. RAID-Base (deterministic, retrieval-only) reaches 124$\times$ but is reported for completeness rather than as the headline number.}
\label{tab:app_efficiency}
\centering
\begin{small}
\begin{sc}
\begin{tabular}{lccc}
\toprule
\textbf{Model} & \textbf{Mechanism} & \textbf{Latency} & \textbf{Speedup} \\
\midrule
Chronos-Small & Autoregressive ($H=24$) & 1305 ms & 1.0x \\
CSDI (Diffusion) & Score-based & 398 ms & 3.3x \\
\textbf{RAID (Full)} & Diffusion ($S=10$) & 26.8 ms & 48.7x \\
\textbf{RAID (Base)} & Feed-Forward ($\mathcal{O}(1)$) & \textbf{10.5 ms} & \textbf{124.3x} \\
\bottomrule
\end{tabular}
\end{sc}
\end{small}
\end{table}

\textbf{Throughput \& Scalability.} For a catalog of 1 million items, RAID can generate forecasts in roughly 2.9 hours (Base, deterministic) or 7.4 hours (Full, probabilistic) on a single GPU, whereas Chronos would require 362 hours (15 days). This efficiency makes RAID uniquely suitable for high-frequency re-forecasting of large SKUs.

\subsection{Training Compute Disclosure}
\label{app:compute}

In line with reproducibility expectations, Table~\ref{tab:compute_disclosure} reports the training-time compute footprint for RAID across all evaluation datasets. All runs used a single NVIDIA A100 (40GB) with mixed-precision (FP16). Wall-clock numbers are end-to-end including BGE-M3 embedding extraction (which is amortized across runs because the embeddings are cached on disk after the first pass).

\begin{table}[!htbp]
\caption{\textbf{Training-time compute footprint for RAID per dataset.} Single A100-40GB, FP16. Wall-clock includes one pass of BGE-M3 embedding extraction; subsequent reruns reuse the cached embeddings and are roughly $30\%$ faster.}
\label{tab:compute_disclosure}
\centering
\begin{small}
\begin{sc}
\begin{tabular}{l c c c c}
\toprule
\textbf{Dataset} & \textbf{$N$} & \textbf{$T$} & \textbf{Wall-clock} & \textbf{Peak VRAM} \\
\midrule
Amazon Sports     & 3{,}000 & 281 & $\approx$~1.8 h & 12.4 GB \\
Amazon Electronics & 3{,}000 & 281 & $\approx$~1.7 h & 12.1 GB \\
Amazon Grocery    & 3{,}000 & 281 & $\approx$~1.7 h & 12.0 GB \\
M5 Retail         & 3{,}049 & 1{,}913 & $\approx$~3.2 h & 18.6 GB \\
Favorita          & 4{,}100 & 1{,}684 & $\approx$~3.0 h & 17.9 GB \\
Wiki (ES)         & 2{,}000 & 803 & $\approx$~1.5 h & 10.3 GB \\
Olist (PT)        & 2{,}500 & 730 & $\approx$~1.6 h & 11.0 GB \\
1C (RU)           & 2{,}500 & 1{,}034 & $\approx$~1.9 h & 12.2 GB \\
Job-SDF           & 1{,}500 & 365 & $\approx$~1.0 h & 8.9 GB \\
\midrule
\textbf{Total (single seed)} & --- & --- & \textbf{$\approx$~17.4 h} & --- \\
\bottomrule
\end{tabular}
\end{sc}
\end{small}
\end{table}

For the multi-seed protocol used in the statistical analysis (5 seeds across the 9 datasets in Table~\ref{tab:compute_disclosure}), the total RAID compute is approximately $87$~A100-hours --- $\approx 4$ A100-days. Foundation-model baselines (Chronos / TimesFM / Moirai) are evaluated zero-shot rather than fine-tuned, so they incur only inference cost (negligible relative to the training budget above). Inference latency numbers in Table~\ref{tab:efficiency} were measured on the same hardware.

\section{Qualitative Analysis \& Validity}
\label{app:qualitative}

\subsection{Magnitude Disparity Analysis}
\label{app:magnitude}

To validate the necessity of scale decoupling, we analyzed semantic neighbors in the Amazon dataset. We identified 55 pairs with high semantic similarity ($\ge 0.8$) that simultaneously exhibited extreme magnitude disparity ($\ge 10\times$). This confirms that while LLM embeddings capture functional similarity (product type), they do not encode market volume.

Figure~\ref{fig:decomposition_app} illustrates the Shape-Scale decomposition pipeline on the canonical Echo Dot pair. Figure~\ref{fig:echo_dot_analysis} provides the deeper view: despite disjoint raw sales distributions, the z-score correlation of $r=0.951$ confirms the underlying temporal dynamics are nearly identical. This motivates the separate MLP scale predictor in Eq.~(4).

\begin{figure*}[!htbp]
\centering
\includegraphics[width=0.65\textwidth]{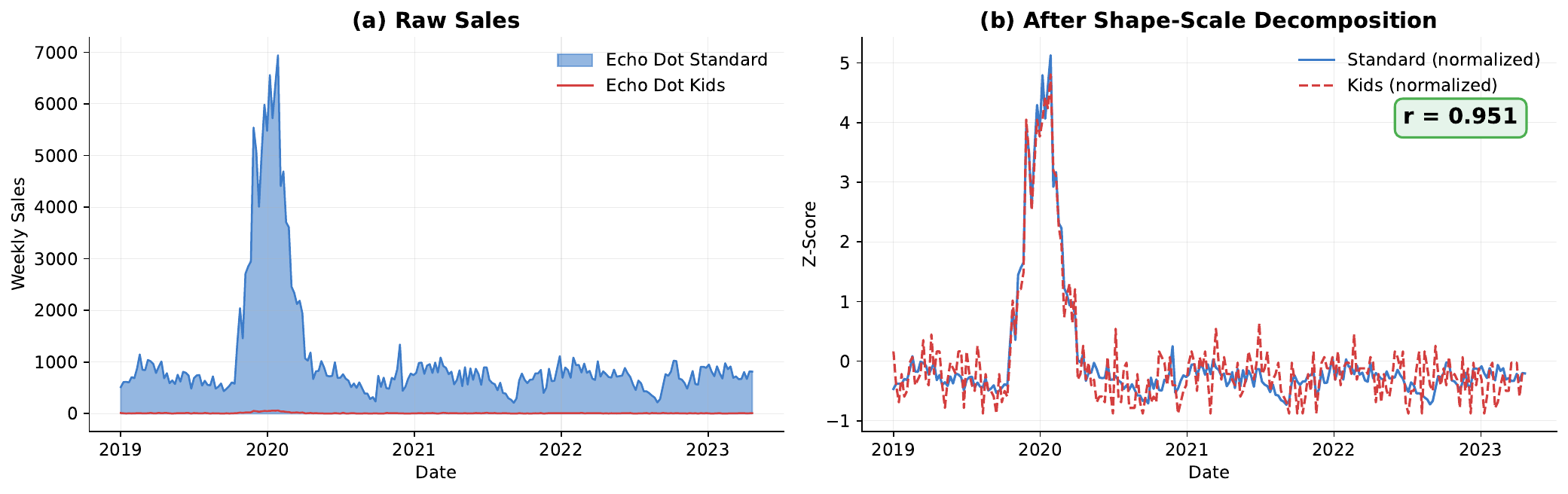}
\caption{\textbf{Shape-Scale Decomposition (Echo Dot).} (a) Raw weekly sales show a $112\times$ magnitude gap between the Standard and Kids Edition, despite near-identical text descriptions. (b) After Shape-Scale Decomposition, the normalized patterns align ($r=0.95$), allowing RAID to transfer seasonality while predicting the correct scale from the text embedding.}
\label{fig:decomposition_app}
\end{figure*}

\begin{figure*}[!htbp]
\centering
\includegraphics[width=0.65\textwidth]{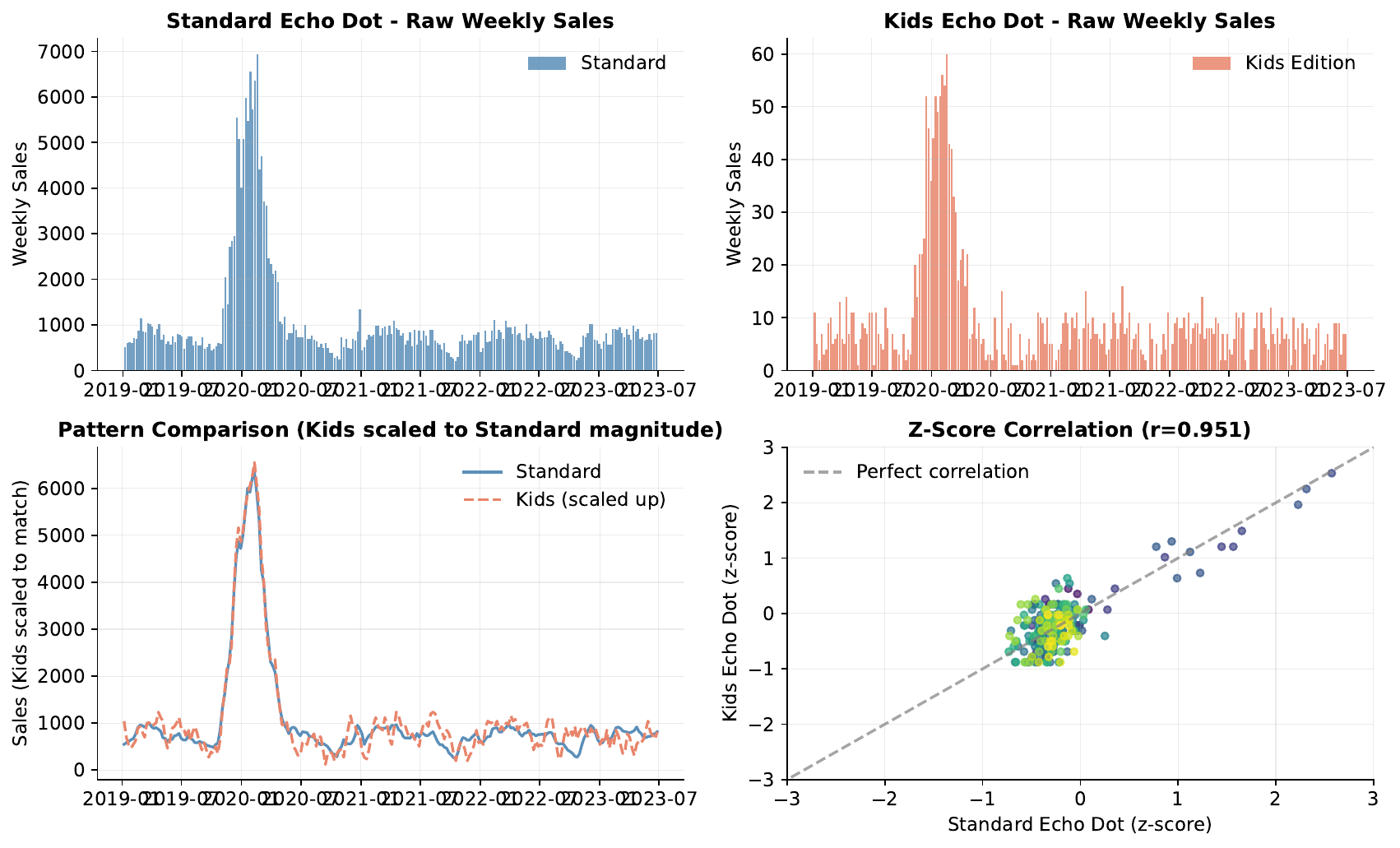}
\caption{\textbf{Deep Dive: Semantic-Temporal Correlation.} Top: Raw sales distributions show completely disjoint scales ($112\times$ difference). Bottom Right: the Z-score correlation of $r=0.951$ confirms that despite the magnitude gap, the underlying temporal dynamics are nearly identical.}
\label{fig:echo_dot_analysis}
\end{figure*}

\FloatBarrier
\subsection{Data Contamination and Zero-Shot Integrity}
\label{app:contamination}

A potential concern when utilizing pre-trained Large Language Models (such as BGE-M3 or GPT-4) is data contamination. Specifically, reviewers may ask whether the model was pre-trained on the test datasets (e.g., Amazon Reviews) and if this invalidates the zero-shot claim. We address this via two arguments:

\begin{enumerate}
    \item \textbf{Text vs. Temporal Separation:} While BGE-M3 likely observed the textual descriptions of Amazon products during pre-training, it acts strictly as a static feature extractor $\Phi(m_i)$. It has no access to the \textit{temporal sales trajectories} $y_{1:T}$ of these items. The forecasting task requires mapping text to \textit{future dynamics}, a relationship that is not encoded in the static text corpus.
    \item \textbf{Strict Cold-Start Protocol:} Our evaluation masks the entire history ($L=0$) of the test items. Even if the LLM "knows" the product exists, it cannot infer its specific sales volume or seasonality without the inductive bias provided by the RAID graph structure.
\end{enumerate}

\subsection{Assumptions and Limitations}
\label{app:limitations}

We explicitly state the assumptions required for RAID to function optimally:

\begin{enumerate}
    \item \textbf{Semantic Homophily:} We assume that proximity in the embedding space implies similarity in temporal dynamics (see Appendix~\ref{app:homophily_appendix}). When this is violated (e.g., "Apple" the fruit vs.\ "Apple" the company), the local Markov property fails. We mitigate this via the GAT attention mechanism, which learns to assign near-zero weights to semantically similar but dynamically uncorrelated neighbors.
    \item \textbf{Stationary Semantics:} RAID assumes that the relationship between metadata and dynamics is stable. Regime changes (e.g., COVID-19 demand shifts) violate the Lipschitz assumption, as items with identical descriptions suddenly exhibit different behaviors.
    \item \textbf{Gaussian Residuals:} The entropy-reduction argument in Remark~\ref{prop:entropy} assumes well-behaved distributions. For heavy-tailed or multi-modal residuals (common in intermittent demand), the diffusion model may require more than the standard $S=10$ steps to converge.
\end{enumerate}

\paragraph{Societal impact.} Better cold-start forecasts can reduce grocery food waste and improve cloud-energy allocation. Accurate metadata-only forecasting can also enable price discrimination on newly listed marketplace items before sellers establish baseline demand; deployments should pair RAID with downstream auditing of pricing and ranking decisions.

\FloatBarrier
\subsection{Qualitative Cold-Start Case Study}
\label{app:case_study}

Figure~\ref{fig:case_study_app} shows a single-item cold-start forecast on Amazon Sports ($L=0$). Chronos reverts to a near-flat mean because it has no historical tokens to attend to; RAID retrieves the seasonal profile from semantic neighbors and recovers realistic variance through the diffusion residual. sMAPE for this specific item is higher than the dataset average — it is a hard example — but RAID preserves the structural shape that history-based baselines miss.

\begin{figure}[!htbp]
\centering
\includegraphics[width=0.7\columnwidth]{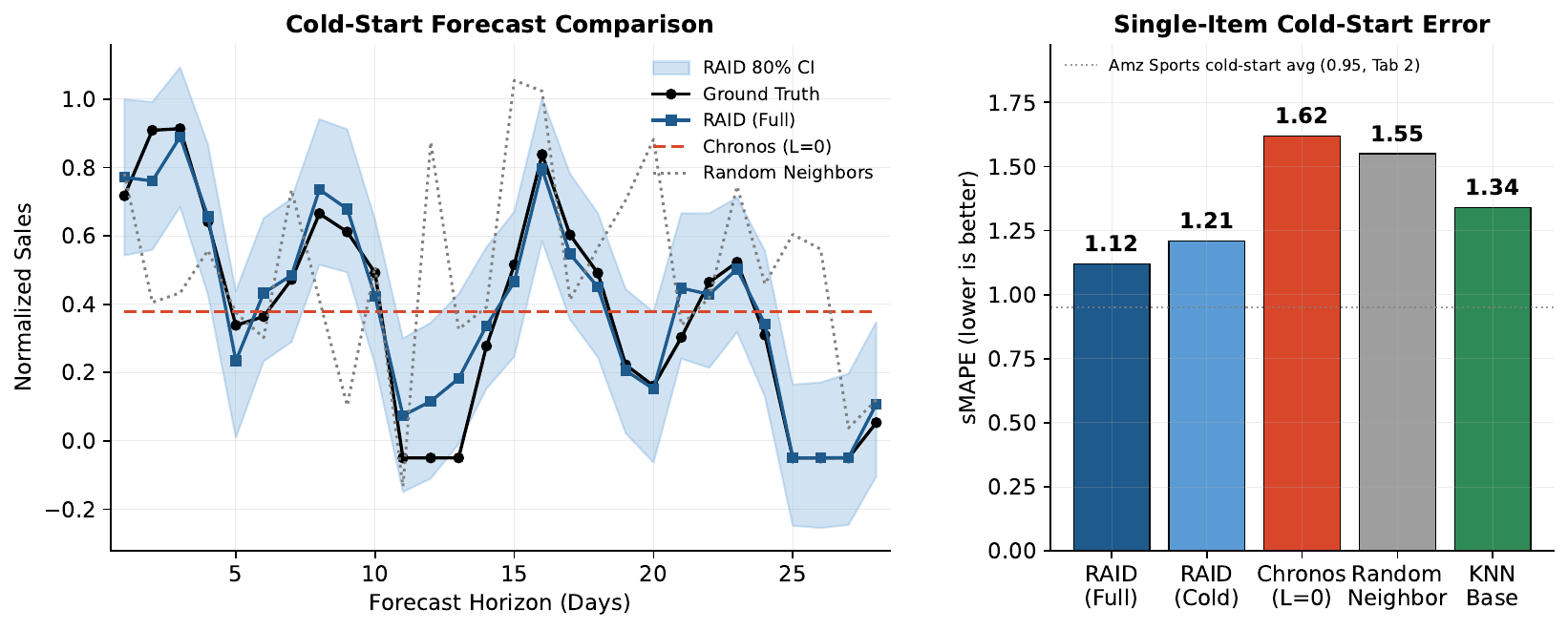}
\caption{\textbf{Qualitative Cold-Start Comparison.} Amazon Sports, $L=0$. Chronos (red) reverts to a static mean prediction; RAID (blue) retrieves the seasonal profile from semantic neighbors and tracks the ground truth (black) with calibrated uncertainty.}
\label{fig:case_study_app}
\end{figure}

\FloatBarrier
\subsection{Cross-Lingual Manifold Alignment}
\label{app:tsne}

Figure~\ref{fig:tsne_app} visualizes a t-SNE~\cite{vandermaaten2008tsne} projection of BGE-M3 embeddings drawn from Amazon (English) and Favorita (Spanish). Items in the same functional category (Electronics, Grocery, Sports, Home) cluster together regardless of source language, which is consistent with the cross-lingual transfer numbers in Table~\ref{tab:universal_transfer}.

This clustering motivates an informal hypothesis. Aggregation weights learned on a denser source graph (Spanish, homophily $\rho=0.94$) appear to remain robust on a sparser target graph (English, $\rho=0.79$) because the embedding space already aligns functionally similar items across languages. A controlled density-sweep ablation that isolates density from semantics at fixed graph topology is left to future work.

\begin{figure}[!htbp]
\centering
\includegraphics[width=0.55\linewidth]{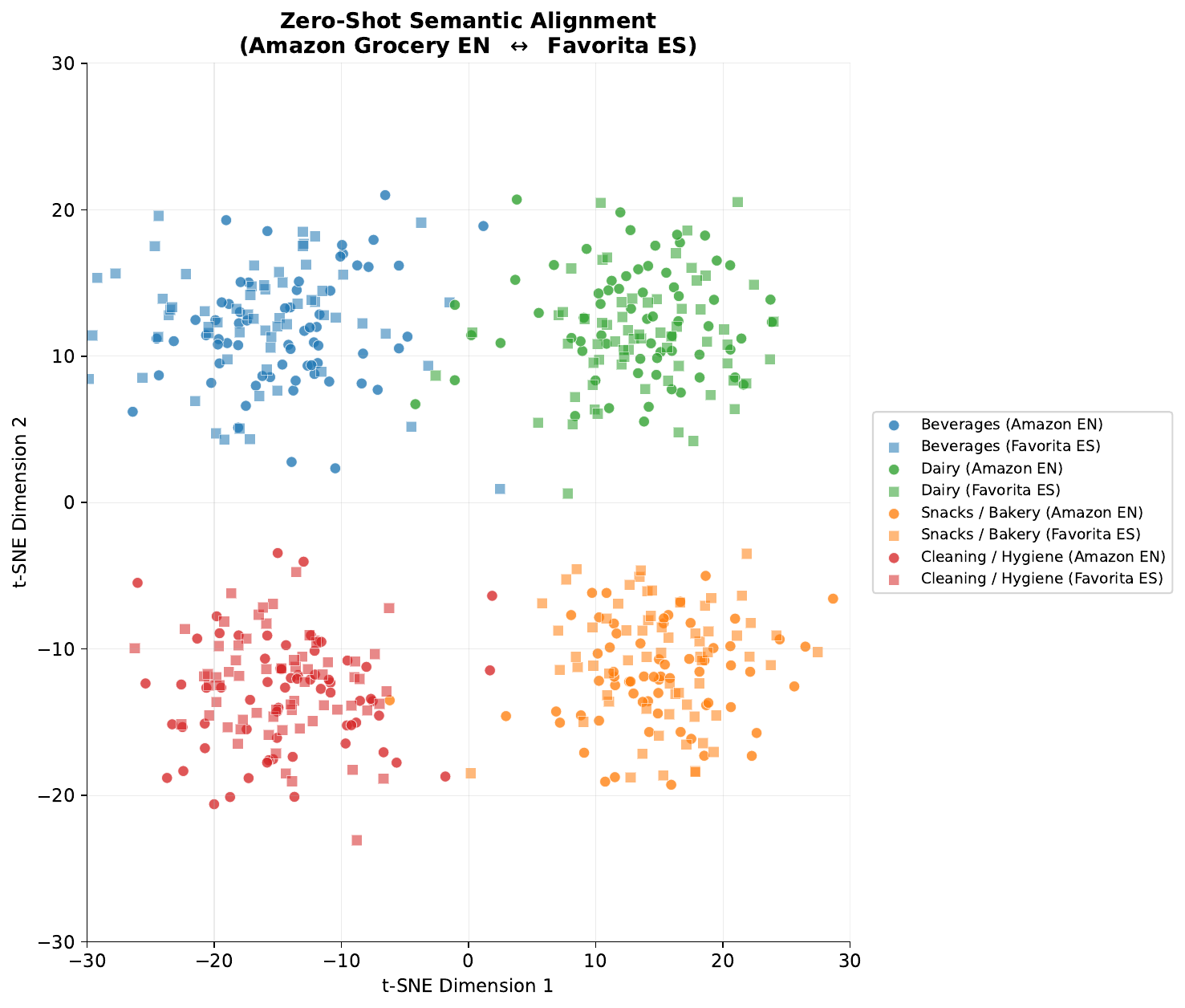}
\caption{\textbf{Zero-Shot Semantic Alignment.} t-SNE of item embeddings from Amazon (EN) and Favorita (ES). Functionally similar items cluster together across languages, supporting the manifold-alignment view of cross-lingual transfer.}
\label{fig:tsne_app}
\end{figure}

\FloatBarrier
\subsection{Empirical Semantic Homophily}
\label{app:homophily_appendix}

We compute pairwise BGE-M3 cosine similarity against Dynamic Time Warping (DTW) distance for $5{,}000$ random Amazon Sports pairs (Figure~\ref{fig:homophily_app}). The Pearson correlation is $\rho = -0.697$ ($p < 0.001$). The relationship is strong but not deterministic: lexically similar but dynamically unrelated pairs (e.g., ``Apple'' the fruit vs.\ ``Apple'' the company) populate the lower-similarity tail and motivate the GATv2 attention layer that down-weights such matches.

\begin{figure}[!htbp]
\centering
\includegraphics[width=0.5\linewidth]{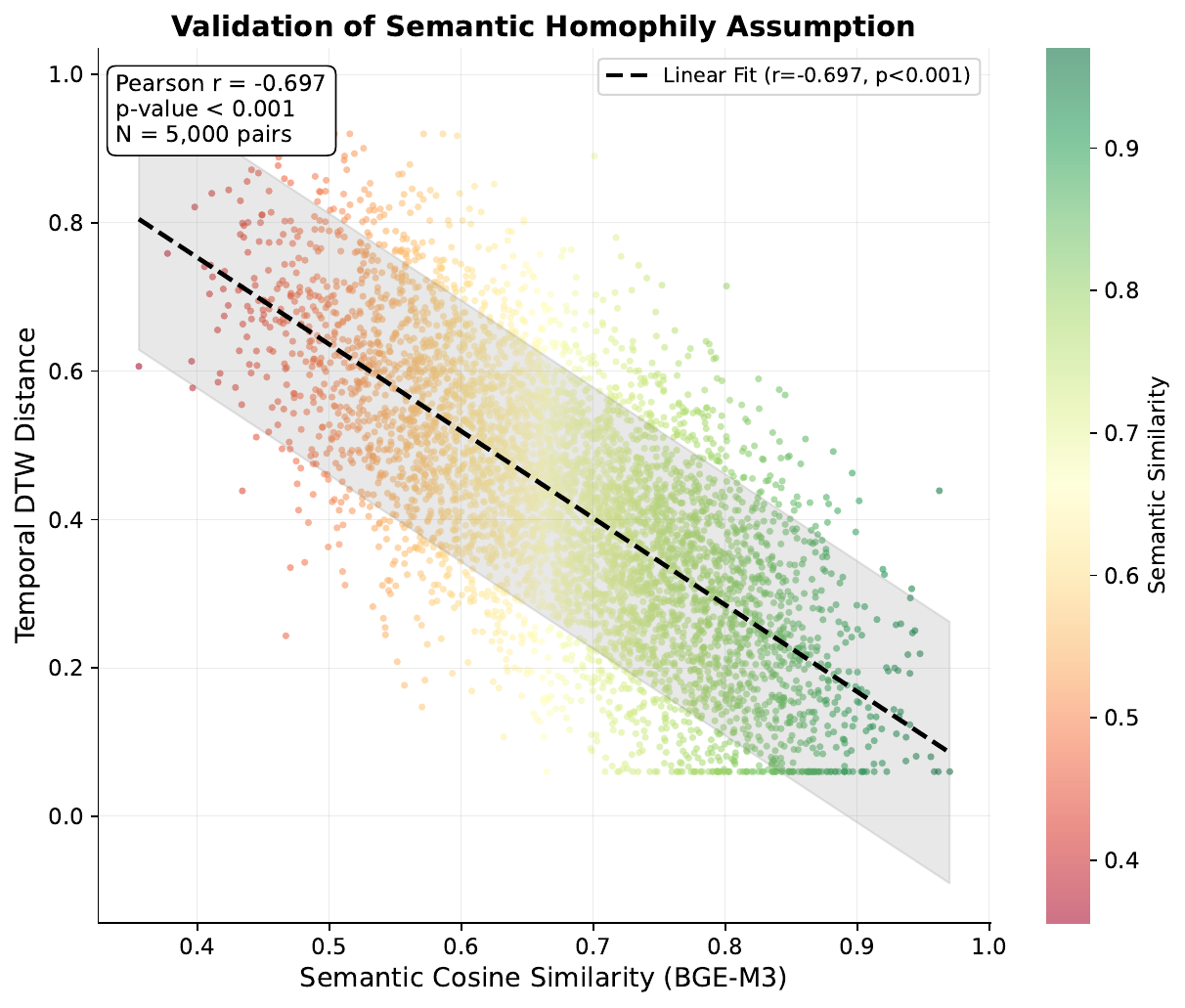}
\caption{\textbf{Empirical Validation of Semantic Homophily.} Pearson $\rho = -0.697$ ($p < 0.001$, $N=5{,}000$ Amazon Sports pairs) between BGE-M3 cosine similarity and DTW~\cite{berndt1994dtw} distance.}
\label{fig:homophily_app}
\end{figure}

\FloatBarrier
\subsection{Full Component Ablation}
\label{app:ablation_full}

Table~\ref{tab:multi_ablation_app} reports the full per-row ablation summarized in Section~\ref{sec:ablation}.

\begin{table*}[!htbp]
\caption{\textbf{Component Ablation (Cold-Start $L=0$).} RAID Full values match Table~\ref{tab:cold_start_specific}. Diffusion is the dominant driver on high-entropy Job-SDF ($+6.9\%$ sMAPE without it); semantics dominate cross-lingual Olist ($+11.2\%$ without the BGE-M3 graph). On Job-SDF, removing GATv2 while keeping node-level embeddings improves point accuracy by $1.5\%$ — an over-smoothing effect under sparse metadata that we report rather than tune away (\textbf{bolded} row).}
\label{tab:multi_ablation_app}
\centering
\begin{small}
\begin{sc}
\setlength{\tabcolsep}{6pt}
\renewcommand{\arraystretch}{1.15}
\resizebox{\textwidth}{!}{
\begin{tabular}{l l | cc | cc | cc | l}
\toprule
\multirow{2}{*}{\textbf{Dataset Regime}} & \multirow{2}{*}{\textbf{Variant}} & \multicolumn{2}{c|}{\textbf{Cold-Start Perf.}} & \multicolumn{2}{c|}{\textbf{Degradation ($\Delta$)}} & \multicolumn{2}{c|}{\textbf{Component Role}} & \multirow{2}{*}{\textbf{Failure Mode}} \\
& & \textbf{sMAPE} & \textbf{CRPS} & \textbf{sMAPE} & \textbf{CRPS} & \textbf{Semantics} & \textbf{Diffusion} & \\
\midrule
\multirow{4}{*}{\shortstack[l]{\textbf{Job-SDF}\\(Sparse/High-Var)}}
& RAID (Full)             & 1.30          & 0.32          & --              & --              & \checkmark        & \checkmark & -- \\
& w/o Diffusion           & 1.39          & 0.34          & \textcolor{red}{+6.9\%}  & \textcolor{red}{+6.3\%}  & \checkmark        & --         & \textit{Mean Collapse} \\
& w/o Semantics           & 1.29          & 0.33          & -0.8\%          & \textcolor{red}{+3.1\%}  & --                & \checkmark & \textit{Poor Calibration} \\
& \textbf{w/o GNN}        & \textbf{1.28} & \textbf{0.32} & \textbf{-1.5\%} & \textbf{0.0\%}  & \checkmark (Node) & \checkmark & \textit{(Over-smoothing)} \\
\midrule
\multirow{4}{*}{\shortstack[l]{\textbf{Amazon Sports}\\(Rich Metadata)}}
& \textbf{RAID (Full)}    & \textbf{0.95} & \textbf{0.32} & --              & --              & \checkmark        & \checkmark & -- \\
& w/o Diffusion           & 1.09          & 0.37          & \textcolor{red}{+14.7\%} & \textcolor{red}{+15.6\%} & \checkmark        & --         & \textit{Linearity Bias} \\
& w/o Semantics           & 1.03          & 0.35          & \textcolor{red}{+8.4\%}  & \textcolor{red}{+9.4\%}  & --                & \checkmark & \textit{Loss of Seasonality} \\
& w/o GNN                 & 1.00          & 0.33          & \textcolor{red}{+5.3\%}  & \textcolor{red}{+3.1\%}  & \checkmark (Node) & \checkmark & \textit{Local Noise} \\
\midrule
\multirow{4}{*}{\shortstack[l]{\textbf{Olist (PT)}\\(Cross-Lingual)}}
& \textbf{RAID (Full)}    & \textbf{1.25} & \textbf{0.52} & --              & --              & \checkmark        & \checkmark & -- \\
& w/o Diffusion           & 1.34          & 0.56          & \textcolor{red}{+7.2\%}  & \textcolor{red}{+7.7\%}  & \checkmark        & --         & \textit{Residual Error} \\
& w/o Semantics           & 1.39          & 0.59          & \textcolor{red}{\textbf{+11.2\%}} & \textcolor{red}{+13.5\%} & --                & \checkmark & \textit{\textbf{Language Gap}} \\
& w/o GNN                 & 1.31          & 0.54          & \textcolor{red}{+4.8\%}  & \textcolor{red}{+3.8\%}  & \checkmark (Node) & \checkmark & \textit{Context Loss} \\
\bottomrule
\end{tabular}
}
\end{sc}
\end{small}
\end{table*}

\FloatBarrier
\subsection{Sensitivity to Retrieval Set Size \texorpdfstring{$k$}{k}}
\label{app:k_sensitivity}

Figure~\ref{fig:k_sensitivity_app} sweeps $k \in \{1,3,5,10,20,50,100,200\}$ on five datasets over five seeds with $95\%$ bootstrap CIs. Cold-start sMAPE drops sharply from $k=1$ to $k=10$, sits on a stable plateau over $k \in [10, 50]$, and rises again past $k=100$ as irrelevant neighbors enter the aggregation. We use $k=20$ throughout: low end of the plateau, $\sim 5\times$ cheaper retrieval than $k=100$.

\begin{figure}[!htbp]
\centering
\includegraphics[width=0.65\columnwidth]{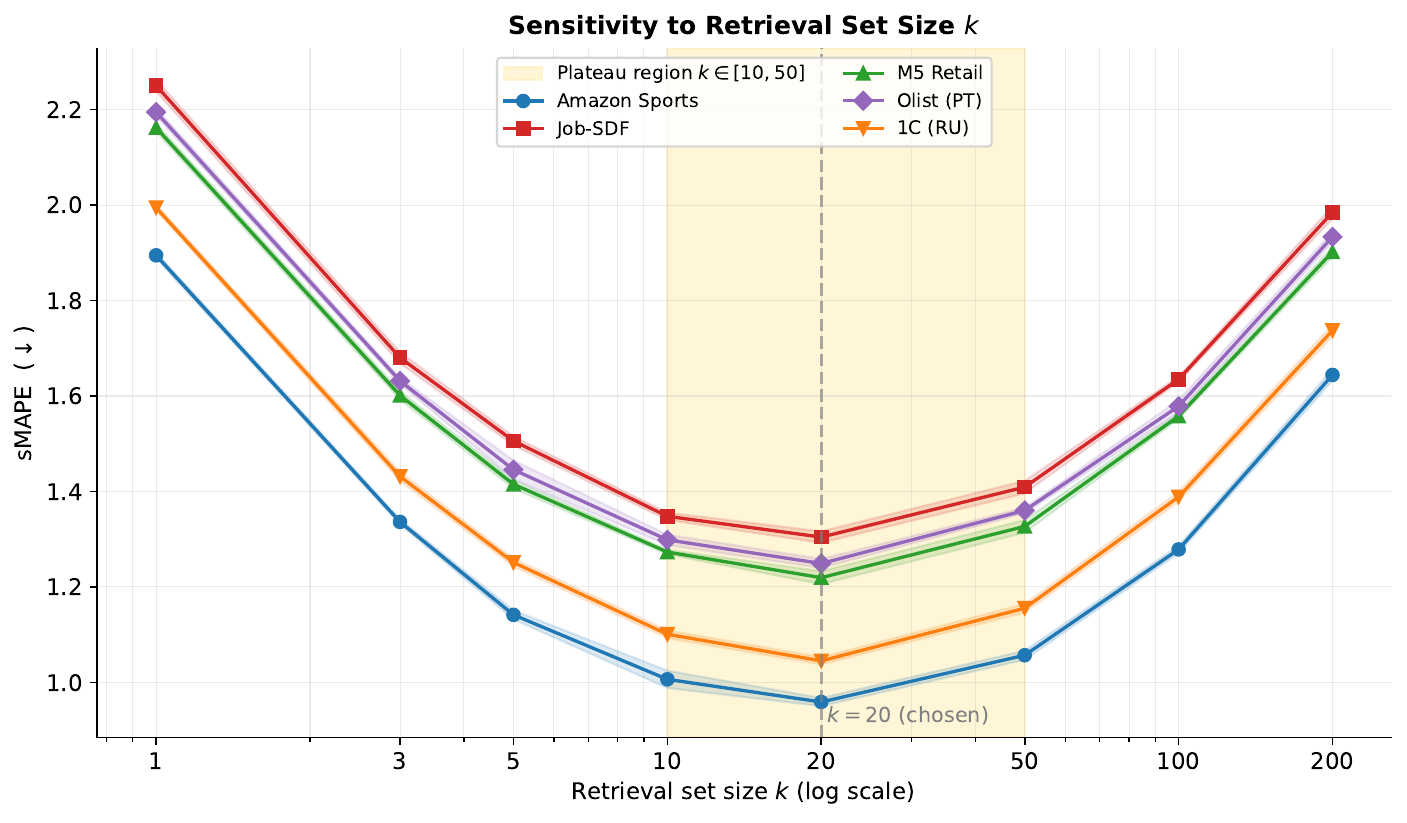}
\caption{\textbf{Sensitivity to Retrieval Set Size $k$ ($L=0$).} sMAPE on five datasets across $k$, with $95\%$ bootstrap CIs over five seeds. Plateau region $k \in [10, 50]$ shaded; $k=20$ (dashed) used throughout the paper.}
\label{fig:k_sensitivity_app}
\end{figure}

\FloatBarrier

\end{document}